%% file: main.tex
\pdftrailerid{}
\documentclass[pdflatex,iicol,sn-mathphys-num]{sn-jnl}
\makeatletter\g@addto@macro\abstractfont{\unboldmath}\makeatother
\usepackage{amsmath,amssymb,amsthm}
\usepackage{graphicx}
\usepackage{booktabs}
\usepackage{algorithm}
\usepackage{algpseudocode}
\usepackage{multirow}
\usepackage{placeins}
\input{macros}
\input{supplement_refs}
\begin{document}
\input{frontmatter}
\maketitle
\input{body}

\backmatter
\input{appendix}
\bibliography{references-rendered}
\input{declarations}
\end{document}


\maketitle
\begin{center}
\SupplementAuthorDetails
\end{center}
\input{supplement_body}

%% file: macros.tex
\providecommand{\pcite}{\citep}
\providecommand{\tcite}{\citet}
\theoremstyle{thmstyleone}
\newtheorem{proposition}{Proposition}[section]
\newtheorem{corollary}[proposition]{Corollary}
\newtheorem{lemma}{Lemma}[section]

%% file: supplement_refs.tex
\newcommand{\OnlineResourceRef}[1]{Online Resource~#1}

%% file: frontmatter.tex
\title{Understanding Latent-Dimension Scaling in Dynamical-System Learning through Spectral Reliability}
\author*[1]{\fnm{Itsushi} \sur{Sakata}}\email{itsushi.sakata@riken.jp}
\author[2]{\fnm{Yuta} \sur{Miyauchi}}\email{y-miyauchi@ist.osaka-u.ac.jp}
\author[2,1]{\fnm{Yoshinobu} \sur{Kawahara}}\email{kawahara@ist.osaka-u.ac.jp}
\affil*[1]{\orgname{RIKEN Center for Advanced Intelligence Project}, \orgaddress{\city{Tokyo}, \country{Japan}}}
\affil[2]{\orgdiv{Graduate School of Information Science and Technology}, \orgname{The University of Osaka}, \orgaddress{\city{Suita}, \state{Osaka}, \country{Japan}}}
\abstract{In deep learning, approximation theory motivates increasing representation size.
We ask whether this benefit extends to dynamics learning through autoregressive prediction.
We analyze the learned time evolution through the eigenstructure of Koopman operators, using relative residuals to detect spurious eigenpairs arising even as one-step error falls.
For bounded Koopman operators, we show that minimal residuals over learned dictionary spaces converge pointwise to their full-space counterparts as these spaces approximate the observable space in \(L^2\).
Our hypothesis is that Koopman spectral reliability helps explain how consistently rollout error decreases with increasing dimension.
We compare two models of a shared Koopman autoencoder trained alternately for reconstruction and latent evolution, using latent-prediction loss (one-step prediction errors in latent coordinates) or spectral-residual loss (relative residuals of candidate eigenpairs).
Across six chaotic systems, both models reduced median windowed rollout error from smallest to largest dimension.
The spectral-residual model achieved lower medians than the latent-prediction model for all systems and dimensions, and its median fell by a larger factor in every system.
Its median decreased monotonically with dimension in four systems, against one for latent prediction.
Against four baseline families, its mean valid prediction times were nearly always longer.
At the largest dimension under two-stage training, we compared eigenvalue positions with each learned dictionary's residual contours.
Spectral-residual eigenvalues concentrated in low-residual regions, whereas latent-prediction eigenvalues also appeared in high-residual regions, consistent with the hypothesis.}
\keywords{Koopman operator, Koopman autoencoder, Latent-dimension scaling, Autoregressive rollout, Pseudospectra, Chaotic dynamical systems}

%% file: body.tex
\input{introduction}
\input{problem_formulation}
\input{learning_methods}
\input{scaling_results}
\input{spectral_results}
\FloatBarrier
\input{conclusion}

%% file: introduction.tex
\section{Introduction}
\label{sec:introduction}
In deep learning, studies have reported power-law decreases in loss with model size, data volume, and computation across multiple domains and for language models \pcite{hestness2017scaling,kaplan2020scaling}, and examined how a fixed compute budget should be allocated between model parameters and training data \pcite{hoffmann2022training}.
\tcite{zhai2022scaling} have reported improvements from scaling in visual recognition with Vision Transformers.
For surrogate models of partial differential equations, a tendency for prediction error to decrease with increasing parameter count was reported \pcite{hao2024dpot,mccabe2024multiple,herde2024poseidon}; in weather forecasting with Stormer, larger models had lower forecast errors, and differences between models were greater at longer prediction horizons \pcite{nguyen2024scaling}.
World models use their predictions of future states or observations for action planning and policy learning \pcite{hafner2019learning,micheli2023transformers}, and in DreamerV3 and TD-MPC2, increasing parameter count improved control performance on the evaluated tasks \pcite{hafner2025mastering,hansen2024tdmpc2}.
In contrast, for zero-shot prediction of chaotic systems, no statistically significant difference in median valid prediction time was observed among the three Chronos models with the highest parameter counts \pcite{zhang2025zeroshot}.
Approximation theory characterizes the expressivity of neural networks.
Universal approximation \pcite{hornik1989} and approximation rates for rectified linear unit (ReLU) networks \pcite{yarotsky2017error,yarotsky2019phase,nakada2019adaptive} characterize the relationship between representation size and achievable approximation error under regularity and architecture assumptions.
Increasing the latent dimension increases the number of observables encoding the state and expands the family of representable dynamics.

When one-step predictors trained on true inputs feed their predictions back as inputs during autoregressive rollout, the input distribution changes; scheduled sampling and training with self-generated inputs address this change \pcite{bengio2015scheduledsampling,vlachas2023learning}.
To limit error accumulation, Model-Based Policy Optimization used short model rollouts starting from real-environment data for policy learning \pcite{janner2019trust}.
In model-based reinforcement learning, \tcite{asadi2018lipschitz} bound multi-step Wasserstein error by the one-step error bound multiplied by a sum of powers of the smaller Lipschitz constant of the true and learned transitions.
In control, \tcite{lambert2020objective} also reported cases where cumulative reward decreases even as one-step prediction likelihood improves.
We ask how the magnitude and consistency of improvement in autoregressive predictions with increasing representation size depend on the loss used to learn latent time evolution.
In this paper, scaling means the dependence of prediction error on representation size \(N\), and for the two Koopman autoencoder models, \(N\) sets the latent dimension and the hidden-layer width.

Existing approaches intervene in the latent transition model, training inputs, training objective, and constraints on the learned operator, and this paper focuses on the training objective.
\tcite{sanchezgonzalez2020gns} perturbed training inputs using noise, and \tcite{brandstetter2022mppde} using self-predictions.
\tcite{schiff2024dyslim} used measure-matching regularization for the long-term distribution, and \tcite{mamakoukas2023stable} imposed stability constraints on the learned operator.
\tcite{list2024unrolling} compared prediction errors across multiple network sizes for one-step training and unrolled training, which unfolds predictions through time.
With increasing latent dimension, \tcite{gonzalez2018deep} reported decreasing field prediction error, while \tcite{wu2022lepde} showed nonmonotonic multi-step prediction error.
\tcite{alkin2024upt} evaluated flow-field rollouts at multiple model sizes but did not evaluate whether the dependence of prediction error on model size varies with the loss function for the dynamics.

The Koopman operator represents nonlinear state evolution through a linear operator on a space of observables, and its spectrum is used for dynamical systems analysis \pcite{koopman1931hamiltonian,koopman1932continuous,mezic2005spectral}.
We use a Koopman autoencoder \pcite{otto2019linearly} that learns an encoder outputting a dictionary of observables, evolves the latent coordinates linearly in time, and reconstructs the state with a nonlinear decoder.
Since repeated application of a learned matrix generates latent dynamics, we evaluate them using rollout error and spectral properties.
Increasing the latent dimension \(N\), the number of dictionary elements, increases the size of the finite-dimensional approximation of the operator.

Spectral pollution occurs when eigenvalues of a finite approximation accumulate outside the spectrum of the Koopman operator even as the approximation size increases \pcite{colbrook2021rigorous}.
Residual dynamic mode decomposition (ResDMD) checks candidate eigenpairs obtained from finite approximations by evaluating their relative residuals with respect to the original Koopman operator \pcite{colbrook2021rigorous,colbrook2023residual}.
The relative residual measures the discrepancy between an observable's evolution and its multiplication by a candidate eigenvalue, normalized by the observable's magnitude.
Evaluation of empirical residuals from sample pairs \pcite{colbrook2024another} has been studied, and ResDMD has also been applied to systems with continuous spectra \pcite{sakata2024pseudoeigenfunctions}.
Methods have been proposed for learning dictionaries using operator residuals \pcite{xu2025nnresdmd}, for penalizing residuals and condition numbers \pcite{coote2026residualguided}, and for using a first-order approximation to a multi-step error bound as the training objective \pcite{conradie2026trustworthy}.
\tcite{conradie2026trustworthy} compare multi-step forecasts from full and truncated Koopman matrices and one-step errors as a function of the number of retained modes.
We evaluate the prediction error at each latent dimension and compare the positions of the estimated eigenvalues with the minimal residual for the learned dictionary.

\tcite{abuduweili2026scaling} augmented the state into the embedding and recovered it via a fixed linear projection, compared prediction error across latent dimensions for autonomous polynomial and robotic systems, and compared closed-loop control across combinations of latent dimensions and auxiliary-loss settings.
Whereas that study analyzed convergence rates for sampling and projection errors under assumptions on spectral decay and subspace approximation, our convergence analysis concerns the minimal residual for bounded Koopman operators and sequences of learned spaces.
We train a shared Koopman autoencoder using nonlinear reconstruction \pcite{otto2019linearly} in two stages and compare the Stage 2 losses: latent-prediction loss, measuring one-step prediction error in latent coordinates, and spectral-residual loss, measuring the empirical relative residuals of candidate eigenpairs.
With the spectral-residual loss, the dictionary is learned using residuals, as in the methods of \tcite{xu2025nnresdmd} and \tcite{coote2026residualguided}.
We hypothesize that the placement of eigenvalues in regions where empirical residuals are small (spectral reliability \pcite{coote2026residualguided}) helps explain how consistently the two models translate added dimensions into lower rollout error, and we expect the spectral-residual model to translate additional dimensions into more consistent and larger rollout improvements.

We make three contributions.
(i) We show that, under two-stage training, the spectral-residual model reduces median windowed rollout error more consistently with increasing latent dimension and by a larger factor from the smallest to the largest dimension than the latent-prediction model.
(ii) For bounded operators and sequences of learned spaces that asymptotically approximate every observable in \(L^2\), we establish that the minimal residual converges pointwise to the infimum of the relative residual over the whole observable space.
(iii) Under two-stage training, at the largest dimension in each of the six chaotic systems, the estimated Koopman eigenvalues of the spectral-residual model concentrated in low empirical-residual regions for the corresponding learned dictionaries, whereas latent-prediction eigenvalues also appeared in high-residual regions.

Section~\ref{sec:problem-formulation} states the prediction problem and convergence results.
Section~\ref{sec:learning-methods} describes the methods and experiments.
Sections~\ref{sec:scaling-results} and~\ref{sec:spectral-results} present prediction errors and spectral properties, and Sect.~\ref{sec:conclusion} concludes the paper.

%% file: problem_formulation.tex
\section{Problem formulation}
\label{sec:problem-formulation}

\subsection{Koopman operator and observables}
\label{sec:states-observables-model-size}
We take the \(d_x\)-dimensional observation vector \(\mathbf{x}_t\) in the state space \(\Omega\subseteq\mathbb{R}^{d_x}\) as the state and consider systems whose state evolves in discrete time steps according to a map \(F:\Omega\to\Omega\), that is, \(\mathbf{x}_{t+1}=F(\mathbf{x}_t)\).
The observation vector is a delay-coordinate vector of the measured state for systems with delay-coordinate inputs, and the measured state itself for the remaining systems.
A function \(g:\Omega\to\mathbb{C}\) of the state is called an observable.

Let \(\mu\) be a probability measure on \(\Omega\), and let \(\mathcal{H}=L^2(\Omega,\mu;\mathbb{C})\) be the Hilbert space of complex-valued square-integrable observables.
We assume that composition \(g\mapsto g\circ F\) defines a bounded linear operator \(\mathcal{K}:\mathcal{H}\to \mathcal{H}\) on \(L^2\) equivalence classes and denote its operator norm by \(\|\mathcal{K}\|\).
Even when \(F\) is nonlinear, \(\mathcal{K}\) is linear in the observable, and we analyze system behavior through the operator's eigenstructure and spectrum \pcite{koopman1932continuous,mezic2005spectral}.
\(\mathcal{K}\) is generally infinite-dimensional; it need not have a finite-dimensional invariant subspace beyond the one spanned by the constant observables and can have a continuous spectrum \pcite{mezic2005spectral,colbrook2021rigorous}.

For latent dimension \(N\in\mathbb{N}\), the encoder \(\phi_{\theta_{\mathrm{e}}}:\mathbb{R}^{d_x}\to\mathbb{R}^{N}\) and decoder \(\psi_{\theta_{\mathrm{d}}}:\mathbb{R}^{N}\to\mathbb{R}^{d_x}\) are maps parameterized by \(\theta_{\mathrm{e}}\) and \(\theta_{\mathrm{d}}\), respectively.
For fixed parameters and an estimated latent matrix \(\widehat{\mathbf{A}}\in\mathbb{R}^{N\times N}\), prediction is given by \(\widehat{\mathbf{z}}_0=\phi_{\theta_{\mathrm{e}}}(\mathbf{x}_0)\), \(\widehat{\mathbf{z}}_{t+1}=\widehat{\mathbf{A}}\widehat{\mathbf{z}}_t\), and \(\widehat{\mathbf{x}}_t=\psi_{\theta_{\mathrm{d}}}(\widehat{\mathbf{z}}_t)\).

On the state space \(\Omega\), we denote the encoder \(\phi_{\theta_{\mathrm{e}}}\) learned at each latent dimension \(N\) by \(\phi_N\), the learned dictionary.
We assume that the \(N\) coordinate observables of \(\phi_N\) belong to \(\mathcal{H}\) and let \(V_N\subset\mathcal{H}\) be the finite-dimensional subspace they span.
Any element of \(V_N\) can be written as \(\phi_N^\top \mathbf{a}\) for some \(\mathbf{a}\in\mathbb{C}^N\).
We assume that \(V_N\neq\{0\}\).

\subsection{Relative residuals and pseudospectra}
\label{sec:residuals-spectral-diagnostics}

For \(g\in \mathcal{H}\setminus\{0\}\) and \(\lambda\in\mathbb{C}\), following \tcite{colbrook2021rigorous}, we measure the accuracy of a candidate eigenpair \((\lambda,g)\) of the Koopman operator by the following relative residual:
\begin{equation}
    \label{eq:residual}
    \operatorname{res}(\lambda,g):=\frac{\|\mathcal{K}g-\lambda g\|_{\mathcal{H}}}{\|g\|_{\mathcal{H}}}.
\end{equation}
Equation~\eqref{eq:residual} tests a specified observable; minimizing over \(V_N\) tests whether this space contains a nonzero observable with a small residual at \(\lambda\).
For each \(\lambda\in\mathbb{C}\), define \pcite{colbrook2021rigorous,colbrook2023residual}
\begin{equation}
    \label{eq:restricted-residual-infima}
    \begin{aligned}
    \tau_N(\lambda)&:=\inf_{g\in V_N\setminus\{0\}}\operatorname{res}(\lambda,g),\\
    \tau(\lambda)&:=\inf_{g\in \mathcal{H}\setminus\{0\}}\operatorname{res}(\lambda,g).
    \end{aligned}
\end{equation}
In particular, \(\tau_N(\lambda)\leq\operatorname{res}(\lambda,g)\) for every \(g\in V_N\setminus\{0\}\).
Thus, a large \(\tau_N(\lambda)\) implies large residuals for all candidates in \(V_N\), whereas a small \(\tau_N(\lambda)\) implies that some candidate has a small residual but does not guarantee the accuracy of a particular computed eigenpair.
For \(\epsilon>0\), we define the restricted and full-space strict sublevel sets.
\begin{equation}
    \begin{aligned}
    \sigma_{N,\epsilon,\mathrm{ap}}(\mathcal{K})&:=\{\lambda\in\mathbb{C}:\tau_N(\lambda)<\epsilon\},\\
    \sigma_{\epsilon,\mathrm{ap}}(\mathcal{K})&:=\{\lambda\in\mathbb{C}:\tau(\lambda)<\epsilon\}.
    \end{aligned}
    \label{eq:spectral-sets}
\end{equation}
The approximate point \(\epsilon\)-pseudospectrum of \tcite{colbrook2021rigorous} is the closure of the full-space strict sublevel set \(\sigma_{\epsilon,\mathrm{ap}}(\mathcal{K})\).
The approximate point spectrum is \(\sigma_{\mathrm{ap}}(\mathcal{K})=\{\lambda\in\mathbb{C}:\tau(\lambda)=0\}\).

We also define the minimal residual based on \(M\) observed snapshot pairs \((\mathbf{x}^{(i)},\mathbf{y}^{(i)})\) of states one time step apart, with \(\mathbf{x}^{(i)}\in\Omega\), \(\mathbf{y}^{(i)}=F(\mathbf{x}^{(i)})\), and \(i=1,\dots,M\).
We choose a basis of \(V_N\), fix a measurable representative of each basis function, and define point evaluations using linear combinations of these representatives.
For given \(M,N\), assume that every nonzero \(g\in V_N\) satisfies \(\frac{1}{M}\sum_{i=1}^{M}|g(\mathbf{x}^{(i)})|^2>0\).
For \(\lambda\in\mathbb{C}\) and \(g\in V_N\setminus\{0\}\), we define the empirical relative residual \(\widehat{\operatorname{res}}_M(\lambda,g)\ge0\) computed from snapshot data as the nonnegative square root of the following expression:
\begin{equation}
    \label{eq:empirical-res}
    \begin{aligned}
    &\widehat{\operatorname{res}}_M(\lambda,g)^2\\
    &\quad:=
    \frac{\sum_{i=1}^{M}|g(\mathbf{y}^{(i)})-\lambda g(\mathbf{x}^{(i)})|^2}
         {\sum_{i=1}^{M}|g(\mathbf{x}^{(i)})|^2}.
    \end{aligned}
\end{equation}
We define the minimal residual over \(V_N\) by the following expression:
\begin{equation}
    \label{eq:empirical-tau}
    \widehat{\tau}_{M,N}(\lambda):=\min_{g\in V_N\setminus\{0\}}\widehat{\operatorname{res}}_M(\lambda,g).
\end{equation}

Let \(\widehat{\mathbf{A}}\) be a candidate finite-dimensional approximation of the Koopman operator on the learned dictionary, estimated using the procedure in Sect.~\ref{sec:model-learning}, and let \(\widehat{\tau}^{\mathrm{num}}_{M,N}\) be the minimal residual defined in Sect.~\ref{sec:metrics-analysis}.
Operationally, spectral reliability means that the eigenvalues of \(\widehat{\mathbf{A}}\) fall where \(\widehat{\tau}^{\mathrm{num}}_{M,N}\) is small, that is, where the retained numerical subspace of \(V_N\) contains a candidate whose empirical relative residual is small at that \(\lambda\).

\subsection{Convergence assumptions}
\label{sec:convergence-scope}
For each \(g\in \mathcal{H}\), define \(E_N(g)\) by
\begin{equation}
    E_N(g):=\inf_{v\in V_N}\|g-v\|_\mathcal{H}.
\end{equation}
\par (A1) Composition defines a bounded linear operator \(\mathcal{K}\) (if \(\mu\) is \(F\)-invariant, then \(\mathcal{K}\) is an isometry \pcite{colbrook2021rigorous}).
\par (A2) Each coordinate observable of \(\phi_N\) belongs to \(\mathcal{H}\), and \(V_N\neq\{0\}\).
The population Gram and cross-correlation matrices are defined by the \(\mathcal{H}\) inner products of the fixed basis and the action of \(\mathcal{K}\).
\par (A3) For the limit in latent dimension, we assume that \(E_N(g)\to0\) for every \(g\in \mathcal{H}\).
It allows non-nested sequences.
\par (A4) For the limit in sample size, we assume that, for each \(N\), the empirical Gram and cross-correlation matrices for a basis of \(V_N\) converge to the corresponding population matrices on a common set of full measure as \(M\to\infty\) (the sampling conditions and proof are given in Appendix~\ref{app:convergence}).

\begin{proposition}[Latent-dimension limit]
\label{prop:conv-latent}
Let \(V_N\subset\mathcal{H}\) be the complex linear span of the learned dictionary defined above, and assume that \(\mathcal{K}:\mathcal{H}\to \mathcal{H}\) is bounded.
Suppose that \(E_N(g)\to0\) as \(N\to\infty\) for every \(g\in \mathcal{H}\).
Then, for every \(g\in \mathcal{H}\) and \(\delta>0\), there exists \(N_\ast\in\mathbb{N}\) such that, for every \(N\ge N_\ast\), there exists \(v_N\in V_N\) satisfying
\begin{equation}
    \|g-v_N\|_\mathcal{H}+\|\mathcal{K}g-\mathcal{K}v_N\|_\mathcal{H}<\delta.
\end{equation}
For every \(\lambda\in\mathbb{C}\),
\begin{equation}
    \tau_N(\lambda)\to\tau(\lambda)
    \qquad (N\to\infty).
\end{equation}
Thus, for every \(\lambda\in\mathbb{C}\) and \(\epsilon>0\), if \(\tau(\lambda)<\epsilon\), then \(\lambda\in\sigma_{N,\epsilon,\mathrm{ap}}(\mathcal{K})\) for sufficiently large \(N\).
\end{proposition}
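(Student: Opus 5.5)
The plan has three steps: first the approximation statement, then convergence of $\tau_N$ to $\tau$ by an upper and a lower bound, and finally the inclusion in the restricted sublevel set.

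\textbf{Step 1: approximation.} Fix $g\in\mathcal{H}$ and $\delta>0$. Since $E_N(g)$ is an infimum over $V_N$, for each $N$ I choose $v_N\in V_N$ with $\|g-v_N\|_\mathcal{H}<E_N(g)+1/N$. Boundedness of $\mathcal{K}$ gives
\[
\|\mathcal{K}g-\mathcal{K}v_N\|_\mathcal{H}\le\|\mathcal{K}\|\,\|g-v_N\|_\mathcal{H},
\]
so the sum in the statement is at most $(1+\|\mathcal{K}\|)(E_N(g)+1/N)$. This tends to $0$ by the assumption $E_N(g)\to0$. Hence some $N_\ast$ makes it smaller than $\delta$ for all $N\ge N_\ast$. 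The argument never compares different $N$, so non-nested sequences $V_N$ cause no difficulty.

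\textbf{Step 2: $\tau_N(\lambda)\to\tau(\lambda)$.} The lower bound is immediate: $V_N\subset\mathcal{H}$ gives $\tau_N(\lambda)\ge\tau(\lambda)$ for every $N$. For the upper bound, fix $\lambda$ and $\eta>0$, and choose $g\in\mathcal{H}\setminus\{0\}$ with $\operatorname{res}(\lambda,g)<\tau(\lambda)+\eta$. Apply Step 1 to this $g$. Then
\[
\bigl|\|\mathcal{K}v_N-\lambda v_N\|-\|\mathcal{K}g-\lambda g\|\bigr|\le\|\mathcal{K}(g-v_N)\|+|\lambda|\,\|g-v_N\|\le\max(1,|\lambda|)\,\delta_N
\]
and $\bigl|\|v_N\|-\|g\|\bigr|\le\delta_N$, where $\delta_N\to0$ denotes the quantity from Step 1.

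\textbf{Main obstacle: the denominator.} For $N$ large, $\delta_N<\|g\|/2$, so $v_N\neq0$ and $\|v_N\|\ge\|g\|/2$. The ratio map $(a,b)\mapsto a/b$ is continuous at $(\|\mathcal{K}g-\lambda g\|,\|g\|)$ because $\|g\|>0$. It follows that $\operatorname{res}(\lambda,v_N)\to\operatorname{res}(\lambda,g)$, and therefore
\[
\limsup_N\tau_N(\lambda)\le\limsup_N\operatorname{res}(\lambda,v_N)=\operatorname{res}(\lambda,g)<\tau(\lambda)+\eta.
\]
Since $\eta$ was arbitrary, combining this with the lower bound gives the limit.

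\textbf{Step 3: sublevel sets.} If $\tau(\lambda)<\epsilon$, then $\tau_N(\lambda)\to\tau(\lambda)$ implies $\tau_N(\lambda)<\epsilon$ for all sufficiently large $N$. That is exactly $\lambda\in\sigma_{N,\epsilon,\mathrm{ap}}(\mathcal{K})$ by definition~\eqref{eq:spectral-sets}. An explicit route is to take the $g$ from Step 2 with $\eta=(\epsilon-\tau(\lambda))/2$.
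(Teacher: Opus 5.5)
Your proposal is correct and follows essentially the same route as the paper. You approximate $g$ in $L^2$ and transfer the estimate to the graph norm via boundedness of $\mathcal{K}$; you get the lower bound $\tau_N(\lambda)\ge\tau(\lambda)$ from $V_N\subset\mathcal{H}$ and the upper bound from a near-minimizer $g$ with $\operatorname{res}(\lambda,v_N)\to\operatorname{res}(\lambda,g)$. The only cosmetic difference is in the denominator: the paper normalizes $\|g\|_{\mathcal{H}}=1$ and uses $\|v_N\|_{\mathcal{H}}\to1$, whereas you use $\|v_N\|\ge\|g\|/2$ directly.
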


Building on \tcite{colbrook2021rigorous} and \tcite{colbrook2024limits}, this proposition shows convergence of the minimal residual for bounded Koopman operators under the assumption that the learned dictionary spaces, which need not be nested, approximate every observable (proof in Appendix~\ref{app:convergence}).
This convergence shows that, as the approximation error of each observable by the dictionary tends to zero, the minimal residual on the dictionary approaches the infimum of the relative residual over the whole observable space.

The inclusion \(V_N\subset \mathcal{H}\) gives \(\tau(\lambda)\le\tau_N(\lambda)\).
Boundedness implies that the \(L^2\) approximation in (A3) also holds in the graph norm \(\|g\|_\mathcal{H}+\|\mathcal{K}g\|_\mathcal{H}\), so a nonzero observable with residual arbitrarily close to \(\tau(\lambda)\) can be approximated by a sequence of elements \(v_N\in V_N\), one for each \(N\); the numerator and denominator of the residual then converge, with a positive denominator limit, giving \(\tau_N(\lambda)\to\tau(\lambda)\) (see Appendix~\ref{app:convergence} for the full proof).
This convergence is uniform on compact sets: for each \(N\) and any \(\lambda,\zeta\in\mathbb{C}\), \(|\tau_N(\lambda)-\tau_N(\zeta)|\le|\lambda-\zeta|\) holds (and the same holds for \(\tau\)), so the common Lipschitz bound strengthens pointwise convergence to uniform convergence on compact sets.

Under (A4), on a common set of full measure, \(\widehat{\tau}_{M,N}\) converges uniformly to \(\tau_N\) on every compact set \(\Lambda\subset\mathbb{C}\) (see the proof of Proposition~\ref{prop:conv-data} in Appendix~\ref{app:convergence}).
\begin{corollary}[Ordered limits]
\label{cor:ordered-limits}
Under Assumptions (A1)--(A4), on a set of full measure common to all \(N\), successive limits \pcite{colbrook2024limits} taken in the order \(M\to\infty\), \(N\to\infty\), \(\epsilon\downarrow0\) characterize the approximate point spectrum:
\begin{equation}
    \begin{aligned}
    \sigma_{\mathrm{ap}}(\mathcal{K})
    &=\bigcap_{\epsilon>0}\sigma_{\epsilon,\mathrm{ap}}(\mathcal{K})\\
    &=\Bigl\{\lambda\in\mathbb{C}:
      \lim_{N\to\infty}\lim_{M\to\infty}
      \widehat{\tau}_{M,N}(\lambda)=0\Bigr\}.
    \end{aligned}
\end{equation}
\end{corollary}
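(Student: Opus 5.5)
The plan is to combine the two convergence statements already available and then pass to \(\epsilon\downarrow0\). First fix the common full-measure set from (A4), on which, for every \(N\), \(\widehat{\tau}_{M,N}\to\tau_N\) uniformly on compact sets as \(M\to\infty\) (the uniform convergence stated just before the corollary, from the proof of Proposition~\ref{prop:conv-data}). Since there are countably many \(N\), intersecting the full-measure sets obtained for each \(N\) still gives a set of full measure, so a single set serves all \(N\); I would check that (A4) is already phrased this way, and otherwise take this countable intersection explicitly. On this set, for each fixed \(\lambda\), \(\lim_{M\to\infty}\widehat{\tau}_{M,N}(\lambda)=\tau_N(\lambda)\). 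Then Proposition~\ref{prop:conv-latent}, which applies under (A1)--(A3), gives \(\lim_{N\to\infty}\tau_N(\lambda)=\tau(\lambda)\). Hence the iterated limit exists and equals \(\tau(\lambda)\) for every \(\lambda\in\mathbb{C}\), so the right-hand set is exactly \(\{\lambda:\tau(\lambda)=0\}=\sigma_{\mathrm{ap}}(\mathcal{K})\) by the definition given in Sect.~\ref{sec:residuals-spectral-diagnostics}.

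For the first equality, observe that \(\lambda\in\bigcap_{\epsilon>0}\sigma_{\epsilon,\mathrm{ap}}(\mathcal{K})\) iff \(\tau(\lambda)<\epsilon\) for all \(\epsilon>0\), and since \(\tau\ge0\) this holds iff \(\tau(\lambda)=0\). This is where the \(\epsilon\downarrow0\) limit enters: it is a set-theoretic intersection of the strict sublevel sets, so no closure is needed. If one prefers the closed pseudospectra, Lipschitz continuity of \(\tau\) gives \(\overline{\sigma_{\epsilon,\mathrm{ap}}}\subseteq\{\tau\le\epsilon\}\), and the intersection is unchanged.

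The only delicate point is making sure the order of limits is respected and the full-measure set does not depend on \(N\) or \(\lambda\). Pointwise existence of the inner limit for all \(\lambda\) simultaneously follows from the uniform convergence on compact sets: cover \(\mathbb{C}\) by countably many compact discs, which requires no further exceptional sets. I expect no genuine obstacle beyond this bookkeeping, since the analytic content is entirely carried by Proposition~\ref{prop:conv-latent} and the data-limit result.
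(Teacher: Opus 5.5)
Your proposal is correct and follows the paper's proof (Proposition~\ref{prop:conv-epsilon}) essentially step for step. Like the paper, you intersect the per-\(N\) full-measure sets from the data-limit argument, apply Proposition~\ref{prop:conv-latent} to get \(\lim_{N\to\infty}\lim_{M\to\infty}\widehat{\tau}_{M,N}(\lambda)=\tau(\lambda)\), and identify both sets with \(\{\lambda:\tau(\lambda)=0\}\) using \(\tau\ge0\). The paper additionally spells out the \(\epsilon\)-level reading of the successive limits (\(\tau_N(\lambda)<\epsilon/2\) for large \(N\), then \(\widehat{\tau}_{M,N}(\lambda)<\epsilon\) for large \(M\), and the converse when \(\tau(\lambda)>0\)), but the displayed equality does not require it.
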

The proof is given as Proposition~\ref{prop:conv-epsilon} in Appendix~\ref{app:convergence}.
The next section describes learning dictionaries and candidate eigenpairs and computing the plotted minimal residual \(\widehat{\tau}^{\mathrm{num}}_{M,N}\).

%% file: learning_methods.tex
\section{Methods}
\label{sec:learning-methods}
The main comparison evaluates the model trained with latent-prediction loss and the model trained with spectral-residual loss within a shared two-stage Koopman autoencoder family as representation size increases.
Both models also enter comparisons with and without the operator-norm penalty, under two-stage and joint training, and with four baseline families.

\subsection{Model and training losses}
\label{sec:model-learning}
Because latent transition losses or Koopman regression losses alone admit zero or constant observables as trivial solutions, we include a reconstruction loss to limit the mapping of distant observation vectors to the same latent coordinates (Lemma~\ref{lem:reconstruction-excludes-collapse}, Appendix~\ref{app:convergence}).
To avoid this degeneracy, \tcite{takeishi2017learning} and \tcite{otto2019linearly} used reconstruction losses, while \tcite{li2017extended} included the state coordinates in the dictionary.
The encoder \(\phi_{\theta_{\mathrm{e}}}\) and the decoder \(\psi_{\theta_{\mathrm{d}}}\) of both models are fully connected multilayer perceptrons (MLPs) with hidden layers of width \(N\) and affine output layers.

Stage 1 updates the encoder and decoder using the reconstruction loss, and Stage 2 then holds the decoder fixed and updates the encoder using either the latent-prediction loss or the spectral-residual loss.
The reconstruction loss for \(n_{\mathrm{rec}}\) observation vectors \(\mathbf{x}^{(i)}\) is
\begin{equation}
\mathcal{L}_{\mathrm{rec}}(\theta_{\mathrm{e}},\theta_{\mathrm{d}})
=
\frac{1}{n_{\mathrm{rec}}d_x}\sum_{i=1}^{n_{\mathrm{rec}}}\!
\bigl\|\mathbf{x}^{(i)}-\psi_{\theta_{\mathrm{d}}}(\phi_{\theta_{\mathrm{e}}}(\mathbf{x}^{(i)}))\bigr\|_2^2.
\label{eq:two_stage_reconstruction_loss}
\end{equation}

\subsubsection{Operator estimation}
At each Stage 2 update, we draw an estimation minibatch from one pool of training pairs and a held-out minibatch from a second, disjoint pool.
The held-out minibatch is not used to estimate the operator at that update; it is used to evaluate the loss function for time evolution that updates the encoder \pcite{bertinetto2019metalearning,lee2019metalearning} (Appendix~\ref{app:settings}).
The estimation minibatch consists of \(m\) snapshot pairs \((\mathbf{x}^{(i)},\mathbf{y}^{(i)})\), \(i=1,\dots,m\), with \(\mathbf{y}^{(i)}=F(\mathbf{x}^{(i)})\).
The superscripts \(\top\) and \(*\) denote the transpose and conjugate transpose, respectively; \(\mathbf{I}_n\) denotes the \(n\times n\) identity matrix, and \(\|\cdot\|_F\) denotes the Frobenius norm.
Let \(\mathbf{Z}_X\) and \(\mathbf{Z}_Y\) be the latent snapshot matrices whose \(i\)th columns are \(\phi_N(\mathbf{x}^{(i)})\) and \(\phi_N(\mathbf{y}^{(i)})\), respectively:
\begin{equation}
\begin{aligned}
\mathbf{Z}_X
&=
[\phi_N(\mathbf{x}^{(1)}),\dots,\phi_N(\mathbf{x}^{(m)})]
\in\mathbb{R}^{N\times m},
\\
\mathbf{Z}_Y
&=
[\phi_N(\mathbf{y}^{(1)}),\dots,\phi_N(\mathbf{y}^{(m)})]
\in\mathbb{R}^{N\times m}.
\end{aligned}
\end{equation}
The matrices \(\mathbf{Z}'_X,\mathbf{Z}'_Y\in\mathbb{R}^{N\times m}\) are constructed analogously from the held-out minibatch.
Define the Gram matrices \(\mathbf{G}_{XX},\mathbf{G}_{YY}\) and the cross-correlation matrix \(\mathbf{G}_{XY}\)
\begin{equation}
\begin{aligned}
\mathbf{G}_{XX}&=\frac{1}{m}\mathbf{Z}_X\mathbf{Z}_X^\top,\qquad \mathbf{G}_{XY}=\frac{1}{m}\mathbf{Z}_X\mathbf{Z}_Y^\top,\\
\mathbf{G}_{YY}&=\frac{1}{m}\mathbf{Z}_Y\mathbf{Z}_Y^\top.
\end{aligned}
\end{equation}
For \(\|\mathbf{Z}_X\|_F>0\), the estimated latent matrix is the regularized least-squares solution
\begin{equation}
\begin{aligned}
\widehat{\mathbf{A}}&=\mathbf{Z}_Y\mathbf{Z}_X^\top\left(\mathbf{Z}_X\mathbf{Z}_X^\top+\varepsilon \mathbf{I}_N\right)^{-1}\\
&=\mathbf{G}_{XY}^\top\left(\mathbf{G}_{XX}+\varepsilon \mathbf{I}_N/m\right)^{-1}.
\end{aligned}
\label{eq:latent_operator}
\end{equation}
Because the Gram matrix becomes nearly singular when the latent coordinates are nearly linearly dependent, we use regularization to stabilize the least-squares solution \pcite{colbrook2021rigorous}.
The regularization parameter \(\varepsilon>0\) is also used as the threshold for retaining eigenvalues of \(\mathbf{Z}_X\mathbf{Z}_X^\top\), and its value is given in Appendix~\ref{app:settings}.

\subsubsection{Loss functions}
\textbf{Latent-prediction loss.}
The latent-prediction loss is
\begin{equation}
\mathcal{L}_{\mathrm{pred}}(\theta_{\mathrm{e}})
=
\frac{1}{Nm}
\left\|\mathbf{Z}'_Y-\widehat{\mathbf{A}}\mathbf{Z}'_X\right\|_F^2.
\label{eq:stage2_latent_prediction_loss}
\end{equation}

\noindent\textbf{Spectral-residual loss.}
For \(\mathbf{a}\in\mathbb{C}^N\) with \(\mathbf{a}^*\mathbf{G}_{XX}\mathbf{a}>0\), the empirical squared residual of \(\phi_N^\top \mathbf{a}\), corresponding to Eq.~\eqref{eq:residual} \pcite{colbrook2021rigorous,colbrook2023residual}, is
\begin{equation}
\begin{aligned}
&\frac{\mathbf{a}^*\left[\mathbf{G}_{YY}-\lambda \mathbf{G}_{XY}^*-\overline{\lambda}\,\mathbf{G}_{XY}+|\lambda|^2\mathbf{G}_{XX}\right]\mathbf{a}}{\mathbf{a}^*\mathbf{G}_{XX}\mathbf{a}}\\
&\qquad=\frac{\left\|\mathbf{a}^\top(\mathbf{Z}_Y-\lambda \mathbf{Z}_X)\right\|_2^2}{\left\|\mathbf{a}^\top \mathbf{Z}_X\right\|_2^2}.
\end{aligned}
\label{eq:empirical-relative-residual}
\end{equation}
For orthonormal eigenvectors \(\mathbf{u}_j\) of the matrix \(\mathbf{Z}_X\mathbf{Z}_X^\top=m\mathbf{G}_{XX}\) with eigenvalues \(\kappa_j\), let \(r\ge1\) be the number of eigenvalues satisfying \(\kappa_j\ge\varepsilon\):
\begin{equation}
\begin{aligned}
\mathbf{U}&=[\mathbf{u}_j:\ \kappa_j\ge\varepsilon]\in\mathbb{R}^{N\times r},\\
\widetilde{\mathbf{A}}&=\mathbf{U}^\top\widehat{\mathbf{A}}\mathbf{U}\in\mathbb{R}^{r\times r},\qquad r\ge1.
\end{aligned}
\label{eq:retained_basis}
\end{equation}
The transpose \(\widetilde{\mathbf{A}}^\top\) acts on observable coefficients.
We assume that \(\widetilde{\mathbf{A}}^\top\) is diagonalizable \pcite{colbrook2021rigorous,colbrook2024another}.
For its right eigenpairs \(\widetilde{\mathbf{A}}^\top \mathbf{v}_k=\lambda_k \mathbf{v}_k\), with \(\lambda_k\in\mathbb{C}\) and \(\mathbf{v}_k\in\mathbb{C}^r\setminus\{0\}\), set \(\mathbf{a}_k=\mathbf{U}\mathbf{v}_k\), which gives the candidate eigenfunction \(\phi_N^\top \mathbf{a}_k\in V_N\).
Normalization by the candidate eigenfunction's sample norm makes the residual scale-independent \pcite{colbrook2024another}.
Assume \(\|\mathbf{a}_k^\top \mathbf{Z}'_X\|_2>0\) for every retained eigenpair \(k=1,\ldots,r\).
Let \(\rho_k\ge0\) be the square root of the right-hand side of Eq.~\eqref{eq:empirical-relative-residual}, evaluated at \((\lambda,\mathbf{a})=(\lambda_k,\mathbf{a}_k)\) with the matrices \(\mathbf{Z}'_X,\mathbf{Z}'_Y\) in place of \(\mathbf{Z}_X,\mathbf{Z}_Y\).
We define the spectral-residual loss by
\begin{equation}
\mathcal{L}_{\mathrm{res}}(\theta_{\mathrm{e}})
=\frac{1}{r}\sum_{k=1}^{r}\rho_k^2.
\label{eq:stage2_spectral_residual_loss}
\end{equation}
ResKoopNet \pcite{xu2025nnresdmd} also learns the dictionary by minimizing the sum of squared empirical residuals over all computed eigenpairs.
When the retained subspace is the whole latent space, the numerator of \(\rho_k^2\) is the squared norm of the one-step prediction error in latent coordinates along \(\mathbf{a}_k\); when it is smaller, the numerator also includes contributions from coordinates outside the retained subspace.

\subsubsection{Training}
The loss function for time evolution \(\mathcal{L}_{\mathrm{evo}}\) is set to \(\mathcal{L}_{\mathrm{pred}}\) or \(\mathcal{L}_{\mathrm{res}}\) depending on the model, with the operator-norm penalty \(w_{\mathrm{op}}\|\widehat{\mathbf{A}}\|_2\) added when the penalty is used in auxiliary comparisons.
Starting from the initial parameters obtained from reconstruction pretraining, we run the two-stage training in Algorithm~\ref{alg:two-stage-training} for 200 epochs, allowing at most \(J_1=J_2=250\) updates per stage in each epoch and using the plateau stopping rule in Appendix~\ref{app:settings}.

\begin{algorithm}[!t]
  \caption{One epoch of two-stage training for the latent-prediction and spectral-residual models}
  \label{alg:two-stage-training}
  \begin{algorithmic}[1]
    \renewcommand{\algorithmicrequire}{\textbf{Input:}}
    \renewcommand{\algorithmicensure}{\textbf{Output:}}
    \Require Training pairs, current parameters $(\theta_{\mathrm{e}},\theta_{\mathrm{d}})$, Stage 2 loss $\mathcal{L}_{\mathrm{evo}}$, per-epoch update limits $(J_1,J_2)$
    \Ensure Updated parameters $(\theta_{\mathrm{e}},\theta_{\mathrm{d}})$
    \For{up to $J_1$ Stage 1 updates}
      \State Update $(\theta_{\mathrm{e}},\theta_{\mathrm{d}})$ with the reconstruction loss in Eq.~\eqref{eq:two_stage_reconstruction_loss}
    \EndFor
    \State Hold $\theta_{\mathrm{d}}$ fixed during Stage 2
    \For{up to $J_2$ Stage 2 updates}
      \State Draw an estimation minibatch and a held-out minibatch from the two disjoint pools of training pairs
      \State Form $(\mathbf{Z}_X,\mathbf{Z}_Y)$ and $(\mathbf{Z}'_X,\mathbf{Z}'_Y)$ from the respective minibatches using the current encoder
      \State Estimate $\widehat{\mathbf{A}}$ by Eq.~\eqref{eq:latent_operator}
      \State Update $\theta_{\mathrm{e}}$ with $\mathcal{L}_{\mathrm{evo}}$
    \EndFor
    \State \Return $(\theta_{\mathrm{e}},\theta_{\mathrm{d}})$
  \end{algorithmic}
\end{algorithm}

\subsubsection{Auxiliary comparisons}
Following prior Koopman models that learn a dictionary and regularize the finite-dimensional evolution matrix with Tikhonov, \(L_2\), or spectral-norm terms \pcite{li2017extended,yeung2019learning,terao2021extended,su2022convex,xiao2024deep}, we compare both models with and without the penalty \(w_{\mathrm{op}}\,\|\widehat{\mathbf{A}}\|_2\) for each system.
In the figure labels, Prediction and Residual denote the latent-prediction and spectral-residual models.

\subsubsection{Joint training}
Joint training uses the same operator estimate and the same loss function for time evolution, simultaneously updating the encoder and decoder:
\begin{equation}
\mathcal{L}_{\mathrm{joint}}(\theta_{\mathrm{e}},\theta_{\mathrm{d}})=w_{\mathrm{evo}}\,\mathcal{L}_{\mathrm{evo}}(\theta_{\mathrm{e}})+w_{\mathrm{rec}}\,\mathcal{L}_{\mathrm{rec}}(\theta_{\mathrm{e}},\theta_{\mathrm{d}}).
\end{equation}
The weights are \((w_{\mathrm{evo}},w_{\mathrm{rec}})=(2,1)\).
The number of epochs, learning rate, initialization, and model selection match those of two-stage training, and joint training allows at most 250 updates per epoch.

\subsection{Benchmark systems}
\label{sec:systems-data}
We evaluate both models on the six chaotic dynamical systems summarized in Table~\ref{tab:system-settings}.
These include systems where continuous or mixed Koopman spectral components can prevent closure with finitely many eigenfunctions \pcite{mezic2005spectral}.
The mixing Lorenz attractor is one such example \pcite{arbabi2017ergodic}.

For each of the five seeds, both models use the same six preprocessed trajectories, while initial conditions vary across seeds.
Here \(t\) is continuous time; all state and parameter symbols below are local.
The R\"ossler equations are
\begin{equation}
\dot{x}=-y-z, \qquad
\dot{y}=x+ay, \qquad
\dot{z}=b+z(x-c),
\end{equation}
with $(a,b,c)=(0.2,0.2,5.7)$.
The Lorenz-63 equations \pcite{lorenz1963deterministic} are
\begin{equation}
\dot{x}=\sigma(y-x), \qquad
\dot{y}=x(\rho-z)-y, \qquad
\dot{z}=xy-\beta z.
\end{equation}
Here, $(\sigma,\rho,\beta)=(10,28,8/3)$.
The Duffing dynamics are
\begin{equation}
\ddot{x}+\delta\dot{x}+\alpha x+\beta x^3=\gamma\cos(\omega t),
\end{equation}
with $(\delta,\alpha,\beta, \omega,\gamma)=(0.3,-1,1,1.2,0.5)$.
For Duffing, the measured state is \([x,\dot{x}]^\top\in\mathbb{R}^2\), where \(x\) is the scalar coordinate and \(\dot{x}\) is its velocity.
Mackey--Glass follows the delay differential equation \pcite{mackey1977oscillation}:
\begin{equation}
\dot{x}(t)=\beta_{\mathrm{MG}} \frac{x(t-\tau)}
{1+x(t-\tau)^{n_{\mathrm{MG}}}}
-\gamma_{\mathrm{MG}} x(t).
\end{equation}
Here, $(\gamma_{\mathrm{MG}},\beta_{\mathrm{MG}},\tau, n_{\mathrm{MG}})=(0.1,0.2,17,10)$.
For Lorenz-96 \pcite{lorenz2006predictability},
\begin{equation}
\begin{aligned}
\dot{x}_i&=(x_{i+1}-x_{i-2})x_{i-1}-x_i+F_{\mathrm{L96}},\\
&\qquad i=1,\dots,40,
\end{aligned}
\end{equation}
with periodic boundary conditions and forcing \(F_{\mathrm{L96}}=8\).
For the Kuramoto--Sivashinsky system, the scalar field \(u(x,t)\) on the periodic domain \(x\in[0,L)\) satisfies the following equation:
\begin{equation}
u_t + u_{xx} + u_{xxxx} + u u_x = 0.
\end{equation}

R\"ossler, Lorenz-63, Duffing, and Lorenz-96 are integrated with the explicit RK45 Runge--Kutta 5(4) method \pcite{dormand1980family}, with absolute and relative tolerances of \(10^{-12}\).
We numerically integrate Mackey--Glass using JiTCDDE \pcite{ansmann2018jitcdde}.
For Kuramoto--Sivashinsky, with \(L=22\), we use the fourth-order exponential time-differencing Runge--Kutta method ETDRK4 \pcite{kassam2005fourth} in Fourier space with a time step equal to the sampling interval, and apply \(2/3\)-rule dealiasing.
The grid is subsampled from \(128\) to \(64\) points by retaining every other grid point when the solution is sampled.

Let \(d\) be the dimension of the measured state, and let \(x_{t,j}\), \(j=1,\ldots,d\), denote its coordinates at time step \(t\).
For R\"ossler, Lorenz-63, Duffing, and Mackey--Glass, the delay-coordinate vector \(\mathbf{x}_t\) stacks \(q\) consecutive samples of the measured state, one time step apart, so that past state information is available for one-step prediction \pcite{brunton2017havok,das2019delay}:
\begin{equation}
\begin{aligned}
\mathbf{x}_t&=\bigl[x_{t-q+1,1},\ldots,x_{t-q+1,d},\ldots,\\
&\qquad x_{t,1},\ldots,x_{t,d}\bigr]^\top\in\mathbb{R}^{d_x},\\
d_x&=qd=N_0.
\end{aligned}
\end{equation}
For Duffing, the history carries the phase of the external periodic forcing.
For Mackey--Glass, whose state description requires a history, \(30\) samples span \(29\) time units of history for the delay \(\tau=17\).
Both models receive the same delay-coordinate input \(\mathbf{x}_t\) at every latent dimension \(N\).
For the high-dimensional Lorenz-96 and Kuramoto--Sivashinsky systems, we set \(q=1\), so \(\mathbf{x}_t\) is the measured state itself.
Using a common chronological split, we divide each raw trajectory into equal-length training/validation and test intervals, and construct delay-coordinate vectors within each interval.

Next, we divide the training/validation delay-coordinate vectors chronologically into a training set and a validation set in a \(9:1\) ratio, so that the nominal training:validation:test ratio is \(45:5:50\) \pcite{vlachas2020backpropagation,azencot2020forecasting}.
We center the resulting trajectories using the componentwise means of the training set and preserve the original variance scale.
Initial-condition generation is described in Appendix~\ref{app:settings}.
The sampling interval \(\Delta t\) is expressed in the time units of each system's governing equations.
With one Lyapunov time (LT) defined as the reciprocal of each system's largest Lyapunov exponent, the number of steps per LT in Table~\ref{tab:system-settings} was determined from numerical estimates of that exponent.
The evaluation window, prediction horizons, and valid prediction time are expressed in LT.

\begin{table*}[t]
\centering
\caption{Measured state dimension \(d\), input dimension \(N_0\) after delay coordinates, and simulation settings for the six systems.}
\label{tab:system-settings}
\setlength{\tabcolsep}{5pt}
\footnotesize
\begin{tabular}{@{}lrrrrrr@{}}
\toprule
System & \(d\) & \(N_0\) & \(\Delta t\) & Steps & Burn-in & Steps per LT \\
\midrule
R\"ossler & \(3\) & \(30\) & \(0.01\) & \(442{,}000\) & \(2{,}000\) & \(1{,}365.75\) \\
Lorenz-63 & \(3\) & \(30\) & \(0.01\) & \(202{,}000\) & \(2{,}000\) & \(110.75\) \\
Duffing & \(2\) & \(30\) & \(0.075\) & \(180{,}000\) & \(14{,}000\) & \(133\) \\
Mackey--Glass & \(1\) & \(30\) & \(1.0\) & \(112{,}000\) & \(12{,}000\) & \(175.2\) \\
Lorenz-96 & \(40\) & \(40\) & \(0.01\) & \(230{,}500\) & \(30{,}000\) & \(60.375\) \\
Kuramoto--Sivashinsky & \(64\) & \(64\) & \(0.25\) & \(88{,}000\) & \(8{,}000\) & \(70.5\) \\
\bottomrule
\end{tabular}
\end{table*}
\subsection{Evaluation metrics}
\label{sec:metrics-analysis}
\subsubsection{Prediction metrics}
Let \(t\) denote the prediction step from a single test starting point, with predicted and reference observation vectors \(\widehat{\mathbf{x}}_t,\mathbf{x}_t\).
Write \(\langle\cdot\rangle\) for the mean over the \(d_x\) components of the observation vector, set \(\bar{x}_t:=\langle\mathbf{x}_t\rangle\), and let \(\varepsilon_{\mathrm{vrmse}}=10^{-7}\).
For the Variance-Scaled Root Mean Squared Error (VRMSE), we apply The Well's definition for spatial fields to the components of observation vectors \pcite{ohana2024well}:
\begin{equation}
\begin{aligned}
&\operatorname{VRMSE}(\widehat{\mathbf{x}}_t,\mathbf{x}_t)\\
&\quad=
\left(
\frac{\left\langle |\widehat{\mathbf{x}}_t-\mathbf{x}_t|^2\right\rangle}
     {\left\langle |\mathbf{x}_t-\bar{x}_t|^2\right\rangle
      +\varepsilon_{\mathrm{vrmse}}}
\right)^{1/2}.
\end{aligned}
\label{eq:vrmse}
\end{equation}

For valid prediction time (VPT), let \(x_{t,j}\) and \(\widehat{x}_{t,j}\) denote the \(j\)th of the final \(d\) components of the reference observation vector \(\mathbf{x}_t\) and the prediction \(\widehat{\mathbf{x}}_t\) at prediction step \(t\), respectively.
We compute the temporal standard deviation \(\operatorname{std}(x_{\cdot,j})\) once from the time series \(x_{\cdot,j}\) over the reference test observation vectors and use the same value for all starting points.
With \(\nu\) denoting the number of steps per LT, let the prediction horizon be \(K=\lceil5\nu\rceil\) steps.
VPT is the number of consecutive prediction steps from the first prediction step for which the normalized error \(e_t\) stays below the threshold \(\epsilon_{\mathrm{VPT}}=0.5\), counted within this horizon and expressed in LT \pcite{vlachas2020backpropagation}:
\begin{align}
\operatorname{VPT}_{\epsilon_{\mathrm{VPT}}}
&=\frac{1}{\nu}\max\bigl\{n\in\{0,\ldots,K\}:\notag\\
&\qquad e_t<\epsilon_{\mathrm{VPT}}\ \text{for }t=1,\ldots,n\bigr\},\notag\\
e_t
&=\left[\frac{1}{d}\sum_{j=1}^{d}
\left(\frac{\widehat{x}_{t,j}-x_{t,j}}
{\operatorname{std}(x_{\cdot,j})}\right)^2\right]^{1/2}.
\label{eq:vpt}
\end{align}

For each of five seeds, we average VRMSE over test starting points and steps within the evaluation window, and average VPT over starting points after computing it at each one.
Across seeds, we summarize VRMSE by the median and interquartile range, and VPT by the mean and sample standard deviation (Table~\ref{tab:vpt-ablation}).
Rollout error is summarized by VRMSE over \((0.5,0.7]\) LT, where the differences in prediction accuracy between the models are clear.
VPT measures how long the normalized prediction error stays below the threshold within the specified prediction horizon.

For each system and representation size, we randomly sample \(3{,}000\) test starting points without replacement using a fixed random seed from positions where the entire prediction horizon lies within the test trajectory, and use the same set across all eight model variants and the four baselines.

\subsubsection{Spectral plots}
For the spectral reliability plots (Sect.~\ref{sec:rossler-spectra}), we pool all \(M\) consecutive pairs within the training, validation, and test splits, and obtain the retained subspace of the learned dictionary from a truncated singular value decomposition (SVD) of their weighted latent snapshot matrix \(M^{-1/2}[\,\phi_N(\mathbf{x}^{(i)})^\top\,]_{i=1}^{M}\).
For each \(\lambda\) on a grid in the complex plane, we compute \(\widehat{\tau}^{\mathrm{num}}_{M,N}(\lambda)\) by minimizing the empirical ResDMD relative residual over nonzero observables in this subspace.
This restriction gives \(\widehat{\tau}^{\mathrm{num}}_{M,N}(\lambda)\ge\widehat{\tau}_{M,N}(\lambda)\) whenever the latter is defined in Eq.~\eqref{eq:empirical-tau}.
The numerical settings for the residual contours are given in Appendix~\ref{app:settings}.
The sublevel set \(\{\lambda:\widehat{\tau}_{M,N}(\lambda)<\epsilon\}\), defined using the full dictionary space, is the ResDMD estimate of the approximate point \(\epsilon\)-pseudospectrum \pcite{colbrook2021rigorous}.
\subsection{Experimental setup}
\label{sec:comparison-training}
Following prior work \pcite{abuduweili2026scaling,wu2022lepde}, we sweep \(N\in\{N_0,2N_0,4N_0,8N_0,16N_0\}\) relative to each system's input dimension \(N_0=d_x\).
The operators used for validation and for the spectral plots are specified in Appendix~\ref{app:settings}.

After each epoch, we evaluate the current model and select the model with the lowest one-step prediction VRMSE on the validation interval \pcite{lusch2018deep,vlachas2020backpropagation,ohana2024well}.
Final metrics are computed on the test interval using the selected parameters \((\widehat{\theta}_{\mathrm{e}},\widehat{\theta}_{\mathrm{d}})\).

We use a latent-state neural ordinary differential equation (Neural ODE) \pcite{chen2018neuralode}, a Consistent Koopman Autoencoder (Consistent KAE) \pcite{azencot2020forecasting}, kernel dynamic mode decomposition (kernel DMD) \pcite{williams2015data,degennaro2019scalable} with random Fourier features (RFF) \pcite{rahimi2007random}, and an Echo State Network (ESN) \pcite{pathak2018model} as baselines.
The four baselines use the same raw trajectories, data splits, centering by the training-set means, five random seeds, VRMSE definition, and Lyapunov-time window as the main experiment.
All four baselines are trained, selected, and rolled out using centered inputs divided by the training-set standard deviations, and their autoregressive predictions from observed test starting points are transformed back to the centered original scale for evaluation.
\(N\) denotes the latent dimension for the two Koopman autoencoder models, the latent state dimension for Neural ODE, the autoencoder latent dimension for Consistent KAE, the number of RFF observables for kernel DMD, and the number of reservoir units for ESN.
The setting rules shared across systems are summarized in Appendix~\ref{app:settings}.

%% file: scaling_results.tex
\section{Prediction error across representation sizes}
\label{sec:scaling-results}

\subsection{Median error and variability}
\label{sec:error-scaling}
In the main two-stage comparison in Fig.~\ref{fig:latent-scaling}, the median windowed VRMSE of the spectral-residual model was lower than that of the latent-prediction model at all five evaluated latent dimensions for each of the six systems.
This median was below the latent-prediction first quartile in 29 of the 30 cases, except for Lorenz-96 at \(N=2N_0\).
For the reported rollouts, the spectral-residual model uses \(\widehat{\mathbf{A}}_{\mathrm{tr}}\) estimated from all training pairs, whereas the latent-prediction model uses the rank-constrained rollout operator derived from this matrix using the method in Appendix~\ref{app:settings} because rollouts with the matrix diverged.

\begin{figure*}[t]
    \centering
    \includegraphics[scale=1]{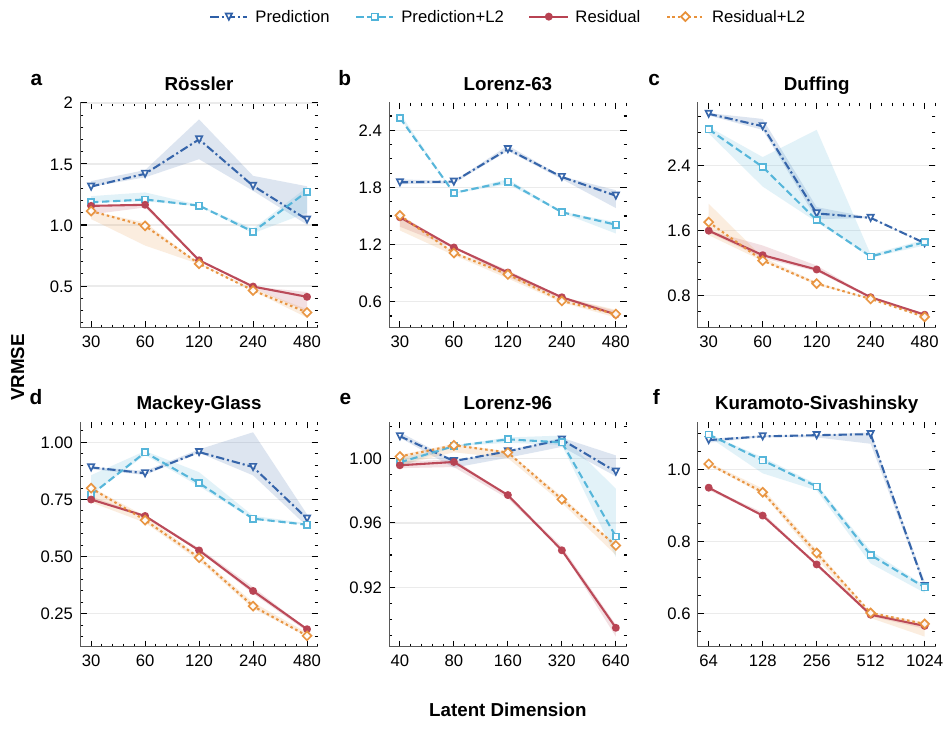}
\caption{Windowed autoregressive rollout VRMSE versus latent dimension for the two Stage 2 objectives in two-stage training (lower is better).
Prediction and Residual denote the latent-prediction and spectral-residual models, and +L2 adds the operator-norm penalty \(w_{\mathrm{op}}\|\widehat{\mathbf{A}}\|_2\).
Panels a--f show R\"ossler, Lorenz-63, Duffing, Mackey--Glass, Lorenz-96, and Kuramoto--Sivashinsky.
The evaluation window is \((0.5,0.7]\) LT, and the latent dimensions are 1, 2, 4, 8, and 16 times the input dimension \(N_0\) of each system.
For each seed, we average VRMSE over the test starting points and prediction steps within the evaluation window.
Lines show the median of these averages across five seeds, and bands show the interquartile range.
Each panel uses a linear vertical scale adjusted to its interquartile range}
    \label{fig:latent-scaling}
\end{figure*}

Under two-stage training, the spectral-residual reduction in median VRMSE from \(N_0\) to \(16N_0\), relative to each model's \(N_0\) median, ranged from \(10\%\) for Lorenz-96 to \(76\%\) for Mackey--Glass, compared with \(2\%\)--\(52\%\) for latent prediction; the reduction was larger for the spectral-residual model in each system.

The spectral-residual median decreased between every consecutive pair of dimensions for Lorenz-63, Duffing, Mackey--Glass, and Kuramoto--Sivashinsky; the exceptions were R\"ossler and Lorenz-96, where the median increased by less than \(1\%\) over the first interval before decreasing.
For the latent-prediction model, Duffing was the only system with a decrease between every consecutive pair of dimensions; in the other five systems, the median increased over 10 of the 20 intervals, by less than \(1\%\) over 6 of them and by up to \(20\%\).

For Kuramoto--Sivashinsky, the reductions from \(N_0\) to \(16N_0\) were comparable (\(40\%\) and \(38\%\) for the spectral-residual and latent-prediction models), but the spectral-residual median decreased over every interval, whereas the latent-prediction median stayed between 1.08 and 1.10 up to \(8N_0\) and fell sharply to 0.68 at \(16N_0\), and the absolute gap between the two medians was smaller at \(16N_0\) than at \(N_0\).

Under two-stage training with the operator-norm penalty applied to both models (+L2 in Fig.~\ref{fig:latent-scaling}), the spectral-residual model had a lower median VRMSE than the latent-prediction model in 27 of the 30 cases, except for Mackey--Glass at \(N=N_0\), and Lorenz-96 at \(N=N_0\) and \(N=2N_0\).

\subsection{Training variants and valid prediction time}
\label{sec:training-prediction-time}
In the VPT evaluation (Eq.~\eqref{eq:vpt}) \pcite{boffetta2002predictability,vlachas2020backpropagation,pathak2018model}, the spectral-residual model in the main two-stage comparison also achieved a longer mean VPT than the latent-prediction model at all five evaluated latent dimensions for each of the six systems, by factors from 1.06 to 5.21 (Table~\ref{tab:vpt-ablation}).
The spectral-residual mean VPT increased in all six systems under two-stage training, by factors from 1.39 to 5.07 from \(N_0\) to \(16N_0\).
It is nondecreasing across the evaluated latent dimensions in all six systems under both two-stage and joint training, with or without the penalty.

Over the first dimension interval, the spectral-residual median VRMSE increased slightly for R\"ossler and Lorenz-96, whereas mean VPT increased.

Lorenz-96 and Kuramoto--Sivashinsky were the two systems with the smallest reductions in median windowed VRMSE from \(N_0\) to \(16N_0\) for the spectral-residual model, but mean VPT for Kuramoto--Sivashinsky increased from 0.27 LT at \(N_0\) to 0.61 LT at \(16N_0\).
Mean VPT for Lorenz-96 was approximately 0.20--0.27 LT throughout the dimension sweep, shorter than the time to the start of the evaluation window \((0.5,0.7]\).

\input{table_vpt}

For R\"ossler at \(N=N_0\), mean VPT was longer for the spectral-residual model under two-stage training and for the latent-prediction model under joint training (Table~\ref{tab:vpt-ablation}).
At \(16N_0\), the mean VPT of the spectral-residual model was longer under two-stage training than under joint training in all six systems, and with the operator-norm penalty, it was longer in five systems, with Lorenz-96 as the exception.
Without an added operator-norm penalty, the mean VPT of the spectral-residual model was longer under two-stage training than under joint training in 23 of the 30 cases.

At \(16N_0\) with two-stage training, the penalty increased the mean VPT of the spectral-residual model in five systems (Table~\ref{tab:vpt-ablation}); mean VPT decreased for Lorenz-96.
For the latent-prediction model, mean VPT increased for Lorenz-63, Mackey--Glass, and Lorenz-96, decreased for R\"ossler, and changed little for Duffing and Kuramoto--Sivashinsky.
Under two-stage training with the penalty, the spectral-residual model also had a longer mean VPT than latent prediction at all five evaluated latent dimensions for each of the six systems.

\subsection{Comparison with baselines}
\label{sec:external-scaling}

In Fig.~\ref{fig:external-scaling}, the median windowed VRMSE of the spectral-residual model under two-stage training was lower than that of the four baselines (Sect.~\ref{sec:comparison-training}) at every comparison point, except for kernel DMD on R\"ossler at \(N=16N_0\) and Consistent KAE on Lorenz-96 at \(N=N_0\), \(N=2N_0\), and \(N=4N_0\).
At \(16N_0\), this median is lower than the lowest median among the four baseline families by factors from 1.08 to 1.76 in five systems; for R\"ossler, the kernel DMD median is lower by a factor of 1.46.

In Table~\ref{tab:vpt-ablation}, the baselines had shorter mean VPTs than the two-stage spectral-residual model, with eight exceptions.
In five of the eight exceptions, the difference in mean VPT was smaller than the sample standard deviation of the baseline's seed-level means across seeds.
Seven of the eight exceptions occurred at small representation sizes \(N\le 4N_0\); the remaining exception involved kernel DMD on R\"ossler at \(16N_0\).
On Lorenz-96, the mean VPT of kernel DMD is 0 at all five representation sizes.
On Lorenz-96 at \(N=N_0\), \(N=2N_0\), and \(N=4N_0\), Consistent KAE has lower median windowed VRMSE than the two-stage spectral-residual model, whereas the spectral-residual model has longer mean VPT.

In Fig.~\ref{fig:external-scaling}, the median windowed VRMSE of the two-stage spectral-residual model decreased monotonically with dimension in four systems, whereas no baseline family did so in more than three systems.

The following factors quantify changes from the smallest to the largest representation size: the kernel DMD median \pcite{kawahara2016kernel} decreases for R\"ossler, Lorenz-63, Duffing, and Mackey--Glass, by factors from 2.08 to 6.26, and is approximately constant for Lorenz-96 and Kuramoto--Sivashinsky.
The ESN median decreases for Mackey--Glass, by a factor of 2.16, and is approximately constant or nonmonotonic for the other systems.

The Neural ODE medians for R\"ossler and Mackey--Glass increase over the first interval and then decrease; overall, the former decreases by a factor of 1.80 and the latter increases by a factor of 1.35.
The median increases by factors of 1.36 for Lorenz-63 and 1.14 for Kuramoto--Sivashinsky, and decreases by factors of 1.21 for Duffing and 1.24 for Lorenz-96.

The Consistent KAE median decreases for R\"ossler, Lorenz-63, Mackey--Glass, and Kuramoto--Sivashinsky, by factors from 1.29 to 1.84, and is approximately constant for Duffing.
For Lorenz-96, it is between 0.96 and 0.97 up to \(4N_0\) and is 2.44 at \(8N_0\) and 1.60 at \(16N_0\).

The Gaussian radial basis function (RBF) kernel of kernel DMD is universal on compact input domains \pcite{steinwart2001influence,sriperumbudur2011universality}, and convergence of extended dynamic mode decomposition (EDMD) for a dense sequence of observables \pcite{korda2018convergence} is a population-level result.
This comparison concerns estimates from finite data at a fixed bandwidth (Appendix~\ref{app:settings}).

\begin{figure*}[tbp]
    \centering
    \includegraphics[scale=1]{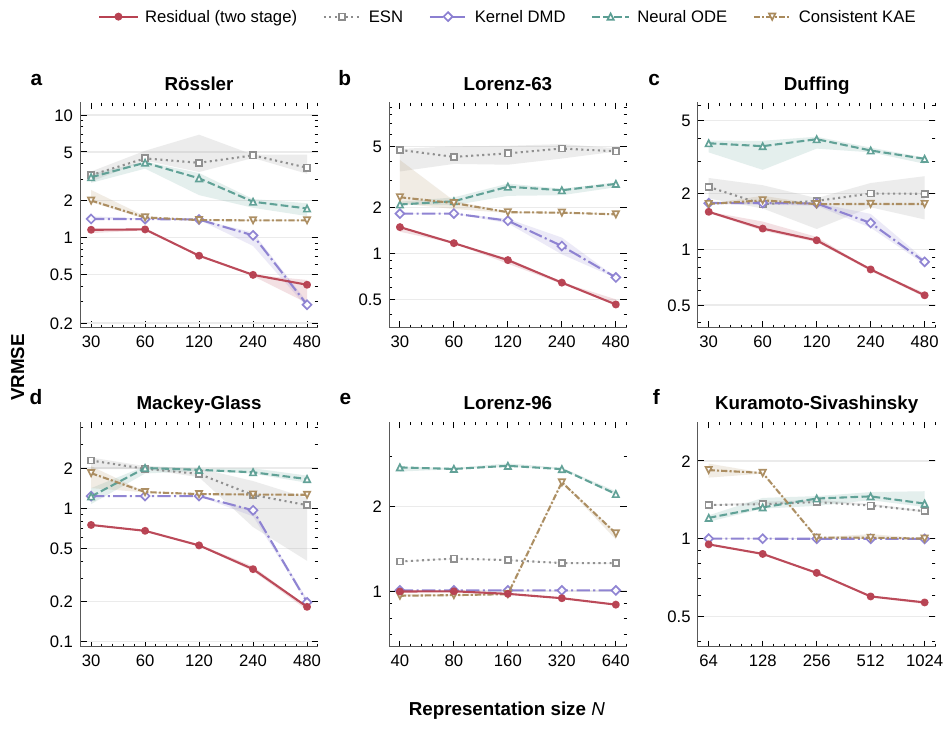}
\caption{Autoregressive rollout VRMSE versus representation size for the baseline families and the spectral-residual model (lower is better).
``Residual (two stage)'' denotes the spectral-residual model under two-stage training.
Panels a--f show R\"ossler, Lorenz-63, Duffing, Mackey--Glass, Lorenz-96, and Kuramoto--Sivashinsky.
The evaluation window is the same \((0.5,0.7]\) LT window as in Fig.~\ref{fig:latent-scaling}.
Markers show medians over five random seeds, and shaded bands show interquartile ranges.
The vertical axis is logarithmic}
    \label{fig:external-scaling}
\end{figure*}

Compared with the latent-prediction model in two-stage training, the spectral-residual model combined lower median windowed VRMSE with larger reductions from \(N_0\) to \(16N_0\) and more consistent improvement with increasing dimension.

%% file: table_vpt.tex
\begin{table*}[t]
\centering
\caption{VPT for 12 conditions in LT (longer is better).
Rows correspond to systems and representation sizes \(N\); columns correspond to the four two-stage and four joint model variants and the four baselines.
Prediction and Residual denote the latent-prediction and spectral-residual models, and +L2 adds the operator-norm penalty (Fig.~\ref{fig:latent-scaling}).
Section~\ref{sec:comparison-training} defines what \(N\) counts for each model family.
For each seed, VPT at a threshold of 0.5 is averaged over 3,000 test starting points; the table reports the mean \(\pm\) sample standard deviation of these seed-level means across five seeds.
Bold indicates the cell with the largest unrounded mean in each row, and a displayed value of 0.00 indicates an unrounded mean below 0.005 LT.}
\label{tab:vpt-ablation}
\setlength{\tabcolsep}{2pt}
\renewcommand{\arraystretch}{1.10}
\resizebox{\textwidth}{!}{%
\begin{tabular}{cr*{12}{c}}
\toprule
System & $N$ & \multicolumn{4}{c}{Two-stage} & \multicolumn{4}{c}{Joint} & \multicolumn{4}{c}{Baselines} \\
\cmidrule(lr){3-6}\cmidrule(lr){7-10}\cmidrule(lr){11-14}
 & & \shortstack{Prediction} & \shortstack{Prediction\\+L2} & \shortstack{Residual} & \shortstack{Residual\\+L2} & \shortstack{Prediction} & \shortstack{Prediction\\+L2} & \shortstack{Residual} & \shortstack{Residual\\+L2} & \shortstack{ESN} & \shortstack{Kernel\\DMD} & \shortstack{Neural\\ODE} & \shortstack{Consistent\\KAE} \\
\midrule
\multirow{5}{*}{R\"ossler} & 30 & 0.16$\pm$0.01 & 0.19$\pm$0.01 & 0.26$\pm$0.01 & 0.24$\pm$0.02 & \textbf{0.28$\pm$0.01} & 0.26$\pm$0.01 & 0.24$\pm$0.00 & 0.24$\pm$0.01 & 0.01$\pm$0.00 & 0.05$\pm$0.02 & 0.12$\pm$0.02 & 0.01$\pm$0.01 \\
 & 60 & 0.16$\pm$0.00 & 0.25$\pm$0.05 & 0.27$\pm$0.00 & \textbf{0.33$\pm$0.03} & 0.16$\pm$0.00 & 0.16$\pm$0.00 & 0.28$\pm$0.01 & 0.30$\pm$0.02 & 0.01$\pm$0.01 & 0.04$\pm$0.01 & 0.09$\pm$0.01 & 0.05$\pm$0.03 \\
 & 120 & 0.16$\pm$0.01 & 0.24$\pm$0.03 & 0.33$\pm$0.01 & \textbf{0.37$\pm$0.01} & 0.19$\pm$0.02 & 0.20$\pm$0.01 & 0.31$\pm$0.01 & 0.31$\pm$0.01 & 0.01$\pm$0.01 & 0.05$\pm$0.01 & 0.09$\pm$0.01 & 0.03$\pm$0.00 \\
 & 240 & 0.27$\pm$0.04 & 0.34$\pm$0.01 & 0.50$\pm$0.02 & \textbf{0.53$\pm$0.02} & 0.28$\pm$0.01 & 0.25$\pm$0.01 & 0.39$\pm$0.01 & 0.39$\pm$0.01 & 0.01$\pm$0.00 & 0.27$\pm$0.09 & 0.09$\pm$0.01 & 0.03$\pm$0.00 \\
 & 480 & 0.42$\pm$0.05 & 0.33$\pm$0.04 & 0.68$\pm$0.03 & \textbf{0.78$\pm$0.06} & 0.27$\pm$0.05 & 0.24$\pm$0.01 & 0.42$\pm$0.01 & 0.42$\pm$0.01 & 0.01$\pm$0.00 & 0.72$\pm$0.11 & 0.08$\pm$0.02 & 0.03$\pm$0.00 \\
\midrule
\multirow{5}{*}{Lorenz-63} & 30 & 0.19$\pm$0.01 & 0.21$\pm$0.01 & 0.34$\pm$0.01 & 0.38$\pm$0.02 & 0.20$\pm$0.01 & 0.22$\pm$0.00 & 0.34$\pm$0.00 & 0.35$\pm$0.00 & 0.12$\pm$0.03 & 0.04$\pm$0.02 & \textbf{0.50$\pm$0.04} & 0.04$\pm$0.02 \\
 & 60 & 0.20$\pm$0.01 & 0.39$\pm$0.06 & 0.44$\pm$0.01 & \textbf{0.51$\pm$0.01} & 0.20$\pm$0.01 & 0.22$\pm$0.00 & 0.45$\pm$0.01 & 0.45$\pm$0.00 & 0.15$\pm$0.04 & 0.06$\pm$0.03 & 0.48$\pm$0.06 & 0.12$\pm$0.03 \\
 & 120 & 0.13$\pm$0.00 & 0.37$\pm$0.02 & 0.58$\pm$0.01 & \textbf{0.61$\pm$0.01} & 0.21$\pm$0.01 & 0.33$\pm$0.06 & 0.54$\pm$0.01 & 0.54$\pm$0.00 & 0.14$\pm$0.05 & 0.13$\pm$0.05 & 0.42$\pm$0.03 & 0.08$\pm$0.01 \\
 & 240 & 0.25$\pm$0.02 & 0.39$\pm$0.01 & 0.78$\pm$0.03 & \textbf{0.82$\pm$0.04} & 0.20$\pm$0.00 & 0.45$\pm$0.02 & 0.67$\pm$0.01 & 0.70$\pm$0.07 & 0.12$\pm$0.02 & 0.52$\pm$0.04 & 0.44$\pm$0.04 & 0.05$\pm$0.00 \\
 & 480 & 0.37$\pm$0.01 & 0.43$\pm$0.06 & 1.01$\pm$0.01 & \textbf{1.06$\pm$0.03} & 0.35$\pm$0.03 & 0.41$\pm$0.01 & 0.81$\pm$0.01 & 0.88$\pm$0.14 & 0.14$\pm$0.02 & 0.84$\pm$0.02 & 0.34$\pm$0.02 & 0.05$\pm$0.00 \\
\midrule
\multirow{5}{*}{Duffing} & 30 & 0.11$\pm$0.00 & 0.18$\pm$0.02 & 0.29$\pm$0.02 & 0.29$\pm$0.05 & 0.13$\pm$0.01 & 0.15$\pm$0.00 & 0.25$\pm$0.00 & 0.25$\pm$0.00 & \textbf{0.40$\pm$0.15} & 0.03$\pm$0.01 & 0.33$\pm$0.03 & 0.07$\pm$0.04 \\
 & 60 & 0.12$\pm$0.03 & 0.21$\pm$0.00 & 0.34$\pm$0.02 & 0.38$\pm$0.02 & 0.11$\pm$0.00 & 0.20$\pm$0.00 & 0.30$\pm$0.00 & 0.29$\pm$0.00 & \textbf{0.38$\pm$0.11} & 0.05$\pm$0.03 & 0.33$\pm$0.02 & 0.13$\pm$0.04 \\
 & 120 & 0.19$\pm$0.02 & 0.20$\pm$0.01 & 0.45$\pm$0.04 & \textbf{0.65$\pm$0.06} & 0.19$\pm$0.00 & 0.20$\pm$0.00 & 0.39$\pm$0.01 & 0.40$\pm$0.01 & 0.50$\pm$0.14 & 0.09$\pm$0.04 & 0.32$\pm$0.03 & 0.08$\pm$0.02 \\
 & 240 & 0.19$\pm$0.01 & 0.40$\pm$0.02 & 1.01$\pm$0.08 & \textbf{1.13$\pm$0.11} & 0.21$\pm$0.00 & 0.18$\pm$0.00 & 0.62$\pm$0.01 & 0.61$\pm$0.01 & 0.47$\pm$0.17 & 0.38$\pm$0.19 & 0.33$\pm$0.04 & 0.04$\pm$0.00 \\
 & 480 & 0.30$\pm$0.02 & 0.30$\pm$0.02 & 1.46$\pm$0.08 & \textbf{1.55$\pm$0.09} & 0.33$\pm$0.00 & 0.33$\pm$0.00 & 1.04$\pm$0.04 & 1.05$\pm$0.04 & 0.42$\pm$0.17 & 0.97$\pm$0.08 & 0.34$\pm$0.03 & 0.03$\pm$0.00 \\
\midrule
\multirow{5}{*}{\shortstack[l]{Mackey--\\Glass}} & 30 & 0.07$\pm$0.01 & 0.20$\pm$0.00 & 0.20$\pm$0.03 & 0.22$\pm$0.01 & 0.06$\pm$0.01 & 0.08$\pm$0.00 & 0.22$\pm$0.00 & 0.22$\pm$0.00 & 0.11$\pm$0.02 & 0.01$\pm$0.00 & \textbf{0.43$\pm$0.07} & 0.04$\pm$0.01 \\
 & 60 & 0.11$\pm$0.00 & 0.30$\pm$0.00 & 0.38$\pm$0.03 & \textbf{0.42$\pm$0.03} & 0.11$\pm$0.00 & 0.18$\pm$0.00 & 0.36$\pm$0.00 & 0.35$\pm$0.00 & 0.18$\pm$0.02 & 0.02$\pm$0.00 & 0.32$\pm$0.02 & 0.06$\pm$0.01 \\
 & 120 & 0.16$\pm$0.04 & 0.25$\pm$0.02 & 0.55$\pm$0.02 & \textbf{0.59$\pm$0.02} & 0.20$\pm$0.01 & 0.24$\pm$0.03 & 0.46$\pm$0.00 & 0.46$\pm$0.00 & 0.27$\pm$0.06 & 0.03$\pm$0.02 & 0.31$\pm$0.01 & 0.05$\pm$0.01 \\
 & 240 & 0.24$\pm$0.02 & 0.40$\pm$0.01 & 0.64$\pm$0.03 & \textbf{0.74$\pm$0.02} & 0.22$\pm$0.01 & 0.33$\pm$0.07 & 0.60$\pm$0.00 & 0.60$\pm$0.00 & 0.52$\pm$0.13 & 0.21$\pm$0.05 & 0.30$\pm$0.02 & 0.03$\pm$0.00 \\
 & 480 & 0.40$\pm$0.03 & 0.42$\pm$0.02 & 0.92$\pm$0.05 & \textbf{0.96$\pm$0.03} & 0.37$\pm$0.02 & 0.37$\pm$0.02 & 0.78$\pm$0.01 & 0.78$\pm$0.01 & 0.66$\pm$0.33 & 0.89$\pm$0.02 & 0.26$\pm$0.01 & 0.02$\pm$0.00 \\
\midrule
\multirow{5}{*}{Lorenz-96} & 40 & 0.09$\pm$0.00 & 0.19$\pm$0.00 & \textbf{0.20$\pm$0.00} & 0.20$\pm$0.00 & 0.19$\pm$0.00 & 0.19$\pm$0.00 & 0.20$\pm$0.00 & 0.20$\pm$0.00 & 0.00$\pm$0.00 & 0.00$\pm$0.00 & 0.09$\pm$0.00 & 0.05$\pm$0.01 \\
 & 80 & 0.20$\pm$0.00 & 0.20$\pm$0.01 & 0.21$\pm$0.00 & 0.21$\pm$0.00 & 0.21$\pm$0.00 & 0.21$\pm$0.00 & \textbf{0.21$\pm$0.00} & 0.21$\pm$0.00 & 0.01$\pm$0.00 & 0.00$\pm$0.00 & 0.09$\pm$0.00 & 0.15$\pm$0.00 \\
 & 160 & 0.20$\pm$0.00 & 0.21$\pm$0.00 & \textbf{0.22$\pm$0.00} & 0.21$\pm$0.00 & 0.20$\pm$0.00 & 0.20$\pm$0.00 & 0.22$\pm$0.00 & 0.21$\pm$0.00 & 0.05$\pm$0.00 & 0.00$\pm$0.00 & 0.08$\pm$0.00 & 0.10$\pm$0.00 \\
 & 320 & 0.21$\pm$0.00 & 0.21$\pm$0.01 & 0.23$\pm$0.00 & 0.23$\pm$0.00 & 0.21$\pm$0.00 & 0.22$\pm$0.02 & \textbf{0.23$\pm$0.00} & 0.22$\pm$0.00 & 0.10$\pm$0.00 & 0.00$\pm$0.00 & 0.08$\pm$0.00 & 0.08$\pm$0.00 \\
 & 640 & 0.22$\pm$0.01 & 0.24$\pm$0.01 & 0.27$\pm$0.00 & 0.26$\pm$0.00 & 0.23$\pm$0.00 & 0.22$\pm$0.00 & 0.26$\pm$0.00 & \textbf{0.34$\pm$0.00} & 0.12$\pm$0.00 & 0.00$\pm$0.00 & 0.08$\pm$0.00 & 0.13$\pm$0.00 \\
\midrule
\multirow{5}{*}{\shortstack[l]{Kuramoto--\\Sivashinsky}} & 64 & 0.25$\pm$0.01 & 0.24$\pm$0.01 & 0.27$\pm$0.00 & 0.25$\pm$0.00 & 0.24$\pm$0.01 & 0.25$\pm$0.00 & \textbf{0.29$\pm$0.01} & 0.27$\pm$0.01 & 0.10$\pm$0.00 & 0.00$\pm$0.00 & 0.03$\pm$0.00 & 0.12$\pm$0.01 \\
 & 128 & 0.25$\pm$0.01 & 0.27$\pm$0.02 & \textbf{0.31$\pm$0.01} & 0.30$\pm$0.00 & 0.25$\pm$0.01 & 0.30$\pm$0.01 & 0.29$\pm$0.01 & 0.31$\pm$0.01 & 0.12$\pm$0.01 & 0.00$\pm$0.00 & 0.03$\pm$0.00 & 0.11$\pm$0.01 \\
 & 256 & 0.25$\pm$0.01 & 0.31$\pm$0.00 & \textbf{0.42$\pm$0.01} & 0.42$\pm$0.01 & 0.26$\pm$0.01 & 0.37$\pm$0.01 & 0.36$\pm$0.01 & 0.36$\pm$0.01 & 0.13$\pm$0.01 & 0.00$\pm$0.00 & 0.03$\pm$0.00 & 0.07$\pm$0.00 \\
 & 512 & 0.25$\pm$0.01 & 0.45$\pm$0.01 & 0.56$\pm$0.01 & \textbf{0.57$\pm$0.01} & 0.32$\pm$0.01 & 0.33$\pm$0.01 & 0.48$\pm$0.01 & 0.49$\pm$0.01 & 0.16$\pm$0.01 & 0.00$\pm$0.00 & 0.03$\pm$0.00 & 0.07$\pm$0.01 \\
 & 1024 & 0.50$\pm$0.01 & 0.50$\pm$0.01 & 0.61$\pm$0.01 & \textbf{0.62$\pm$0.02} & 0.51$\pm$0.01 & 0.46$\pm$0.01 & 0.57$\pm$0.01 & 0.57$\pm$0.01 & 0.18$\pm$0.01 & 0.00$\pm$0.00 & 0.06$\pm$0.00 & 0.03$\pm$0.00 \\
\bottomrule
\end{tabular}%
}
\end{table*}

%% file: spectral_results.tex
\section{Spectral properties and predictions}
\label{sec:spectral-results}

\subsection{Spectral analysis of the learned operators}
\label{sec:rossler-spectra}
Following ResDMD, we compare contours of the minimal residual \(\widehat{\tau}^{\mathrm{num}}_{M,N}\) over the retained subspace of each learned dictionary with the estimated Koopman eigenvalues for a posteriori verification \pcite{colbrook2021rigorous,xu2025nnresdmd}.
For the eight model variants at the largest evaluated latent dimension \(16N_0\), Fig.~\ref{fig:pseudospectra} shows the eigenvalues and contours of the minimal residual for R\"ossler, and the spectral plots and predictions for the other five systems are shown in \OnlineResourceRef{1}, Figs.~S1--S10.
The rollout matrix \(\widehat{\mathbf{A}}\) and \(\mathbf{z}_t=\phi_{\widehat{\theta}_{\mathrm{e}}}(\mathbf{x}_t)\) satisfy
\begin{equation*}
\mathbf{z}_n-\widehat{\mathbf{A}}^{n}\mathbf{z}_0
=\sum_{t=0}^{n-1}\widehat{\mathbf{A}}^{n-1-t}
\bigl(\mathbf{z}_{t+1}-\widehat{\mathbf{A}}\mathbf{z}_t\bigr).
\end{equation*}
Multi-step state prediction error depends on reconstruction error at the target state, one-step prediction errors in all latent directions, their propagation by powers of \(\widehat{\mathbf{A}}\), and the decoder.
The eigenvalues of \(\widehat{\mathbf{A}}_{\mathrm{tr}}\) are plotted for both models, whereas latent-prediction rollouts use the rank-constrained operator of Appendix~\ref{app:settings} to avoid divergence.

Under two-stage training, spectral-residual eigenvalues in all six systems were concentrated in regions where the plotted minimal residual was small.
By contrast, latent-prediction eigenvalues in both Prediction and Prediction+L2 also scattered into high-residual regions inside and outside the unit circle.
In the main two-stage comparison, the one-step prediction error of the latent-prediction model decreased from \(N_0\) to \(16N_0\) in all six systems.

Under joint training, spectral-residual eigenvalues extended farther toward the origin than those from two-stage training, and latent-prediction eigenvalues were also scattered across high-residual regions for their learned dictionary.
In the Prediction column for Lorenz-96, most eigenvalues clustered in low-residual regions near \(\lambda=1\) under both training methods (\OnlineResourceRef{1}, Fig.~S7).

\begin{figure*}[t]
    \centering
    \includegraphics[scale=1]{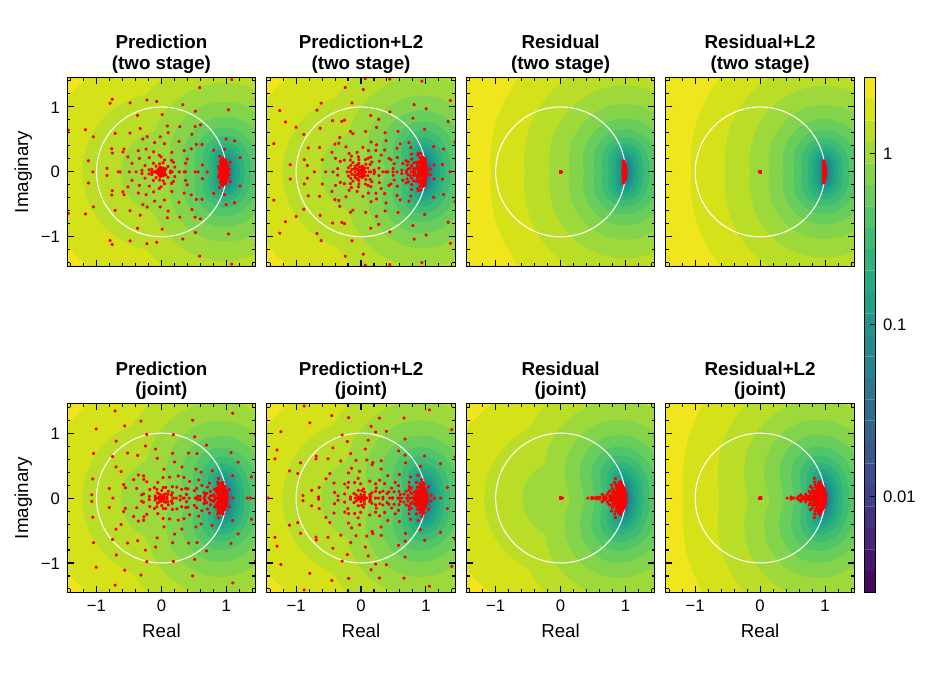}
\caption{Minimal residuals \(\widehat{\tau}^{\mathrm{num}}_{M,N}\) and eigenvalues for eight model variants in R\"ossler at latent dimension \(16N_0\).
Colors show, on a logarithmic scale, contours of the minimal residual computed over the retained numerical subspace, with the same color range across all panels.
Low-residual regions appear near \(\lambda=1\) in all eight panels.
White curves mark the unit circle, and red points mark the estimated Koopman eigenvalues within the plotted range.
The weighted latent snapshot matrix used for the contours is defined in Sect.~\ref{sec:metrics-analysis}, and the matrix whose eigenvalues are plotted is defined in Appendix~\ref{app:settings}.
The top row shows two-stage training, and the bottom row shows joint training.
From left to right, the columns show Prediction (the latent-prediction model), Prediction+L2, Residual (the spectral-residual model), and Residual+L2; +L2 adds the operator-norm penalty \(w_{\mathrm{op}}\|\widehat{\mathbf{A}}\|_2\)}
    \label{fig:pseudospectra}
\end{figure*}

\subsection{Qualitative comparison of predictions}
\label{sec:reconstructed-geometry}
We compare the four model variants separately for two-stage training and joint training.
Figure~\ref{fig:attractors} shows predictions made from different observation vectors at each prediction horizon.
For R\"ossler in Fig.~\ref{fig:attractors}, the spectral-residual model with two-stage training shows the reference fold at 0.6 LT, and at 1.0 LT it shows the main loop and a single excursion in the third coordinate that is lower than the highest excursions of the reference.
The overall shape is also captured when the operator-norm penalty is added.
For the spectral-residual model with joint training, the fold at 0.6 LT is lower than the highest excursions of the reference, and at 1.0 LT only the planar loop is visible.
For the latent-prediction model, the loop is distorted at 0.2 LT under both two-stage and joint training.

For Lorenz-63, Duffing, and Mackey--Glass, the spectral-residual model with two-stage training captures the overall shapes of the reference attractors at the displayed prediction horizons up to 1.0 LT.
For Lorenz-63, two lobes smaller than those of the reference attractor can be distinguished even at 1.0 LT, and small kinks appear in the curves.
For the spectral-residual model with joint training, the two lobes of Lorenz-63, the outer orbit and the loops on both sides in Duffing, and the folding structure of Mackey--Glass are visible at 1.0 LT, but all are smaller in extent than their reference counterparts and are distorted.
Compared with predictions from two-stage training, those from joint training have a noticeably smaller extent for Lorenz-63 and Duffing from 0.6 LT onward, while the difference in overall shape is small for Mackey--Glass.
For the latent-prediction model, distortions are present at 0.2 LT under both two-stage and joint training; for Mackey--Glass, pronounced loss of the reference folding structure occurs at 0.6 LT.
For Lorenz-96 and Kuramoto--Sivashinsky, the striped structure of the field is distorted for both models at 0.6 LT under both two-stage and joint training (\OnlineResourceRef{1}, Figs.~S8 and~S10).

\begin{figure*}[p]
    \centering
    \includegraphics[scale=1]{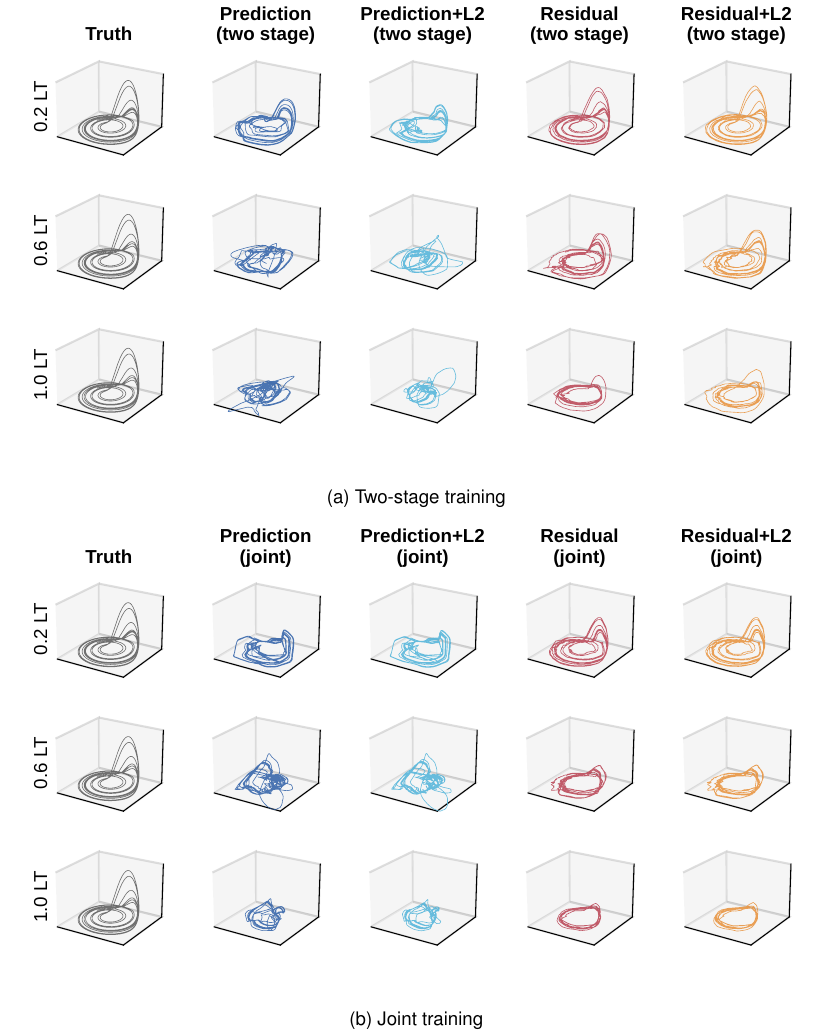}
\caption{Predictions for R\"ossler at latent dimension \(16N_0\) (top: two-stage training; bottom: joint training).
From left to right, the columns show the reference trajectory, Prediction (the latent-prediction model), Prediction+L2, Residual (the spectral-residual model), and Residual+L2; +L2 adds the operator-norm penalty \(w_{\mathrm{op}}\|\widehat{\mathbf{A}}\|_2\).
Each row corresponds to one prediction horizon (0.2, 0.6, or 1.0 LT), and predictions are made that far ahead from different observation vectors in the reference trajectory.
The plots show the three components of the measured state extracted from predictions whose target times fall within the first 5 LT of the test trajectory.
Axis limits follow the reference trajectory and are shared across all panels}
    \label{fig:attractors}
\end{figure*}

At \(16N_0\), the spectral-residual model under two-stage training had eigenvalues concentrated in low-residual regions for its learned dictionary, achieved a longer mean VPT than the latent-prediction model under two-stage training and the spectral-residual model under joint training, and achieved a lower median windowed VRMSE than the former.
This correspondence is consistent with an interpretation of this better prediction performance in terms of spectral reliability.

%% file: conclusion.tex
\section{Conclusions}
\label{sec:conclusion}
In the main two-stage comparison, the spectral-residual model reduced median windowed rollout error more consistently with increasing latent dimension than the latent-prediction model, and its error fell by a larger factor from the smallest to the largest dimension in every system.
Although both models reduced median error from the smallest to the largest dimension in all six chaotic systems, the spectral-residual model achieved lower median windowed errors and longer mean VPTs at all five evaluated latent dimensions in each system.
In the comparison with baseline families using family-specific training settings, the spectral-residual model also achieved favorable results on the same two metrics, with some exceptions.

In our theoretical analysis of bounded Koopman operators, we showed that as the space spanned by the learned dictionary approximates the observable space, the minimal residual over the dictionary space converges to the infimum of the relative residual over the whole observable space.
This result provides a mathematical basis for expecting residual-based spectral information to become more reliable as the latent dimension increases.

Under two-stage training at the largest evaluated dimension, the estimated Koopman eigenvalues of the spectral-residual model were concentrated in low-residual regions for its learned dictionary, whereas those of the latent-prediction model also appeared in high-residual regions.
The concentration of the eigenvalues of the spectral-residual model in low-residual regions for its learned dictionary, seen when they are overlaid on dictionary-specific residual contours, corresponds to longer mean VPT, and this correspondence is consistent with an interpretation of the prediction results in terms of spectral reliability.
Predictions from the spectral-residual model preserved the overall shapes of several reference attractors, whereas in field predictions for Lorenz-96 and Kuramoto--Sivashinsky, structures were distorted for both models despite the concentration of spectral-residual eigenvalues.

The next question is how spectral residuals, the propagation of one-step prediction errors in latent coordinates, and decoding jointly determine the dependence of rollout error on latent dimension.
For Koopman operators with continuous spectrum, a further question is under what conditions this dependence follows a power law, and what determines the exponent.

%% file: appendix.tex
\begin{appendices}
\renewcommand{\theHtable}{\Alph{section}\arabic{table}}
\input{content}
\end{appendices}

%% file: content.tex
\section{Spectral convergence and reconstruction}
\label{app:convergence}
\subsection*{Convergence of the minimal residual}
We take the limits in the order \(M\to\infty\), then \(N\to\infty\).
Joint or exchanged limits generally require additional assumptions relating \(M\) and \(N\) \pcite{colbrook2024limits}.

\subsubsection*{Setting}
We use \(\mathcal{H},\phi_N,V_N\), and \(\mathcal{K}\) as defined in Sect.~\ref{sec:states-observables-model-size}, and let \(I\) denote the identity operator on \(\mathcal{H}\).
For the data limit, we hold \(N\) and the space \(V_N\) fixed, set \(d_N=\dim V_N\le N\), and choose a basis \(b_1,\ldots,b_{d_N}\) independently of the evaluation samples.
Each \(g\in V_N\) can be written as \(g=\sum_{j=1}^{d_N}a_jb_j\) with a unique coefficient vector \(\mathbf{a}\in\mathbb{C}^{d_N}\), and we define point evaluations by linear combinations of fixed measurable representatives of the basis.
Define the evaluation matrices of the fixed basis, \(\boldsymbol{\Psi}_X,\boldsymbol{\Psi}_Y\in\mathbb{C}^{M\times d_N}\), and the Gram and cross-correlation matrices as follows:
\[
\begin{aligned}
(\boldsymbol{\Psi}_X)_{ij}&=b_j(\mathbf{x}^{(i)}),\\
(\boldsymbol{\Psi}_Y)_{ij}&=b_j(\mathbf{y}^{(i)}),
\end{aligned}
\qquad 1\le i\le M,\ 1\le j\le d_N,
\]
\[
\begin{aligned}
\mathbf{G}_{XX}^{(M,N)}&=\boldsymbol{\Psi}_X^\ast \mathbf{W}\boldsymbol{\Psi}_X,\\
\mathbf{G}_{XY}^{(M,N)}&=\boldsymbol{\Psi}_X^\ast \mathbf{W}\boldsymbol{\Psi}_Y,\\
\mathbf{G}_{YY}^{(M,N)}&=\boldsymbol{\Psi}_Y^\ast \mathbf{W}\boldsymbol{\Psi}_Y,\qquad \mathbf{W}=\mathbf{I}_M/M.
\end{aligned}
\]
When \(d_N=N\) and the coordinate observables of \(\phi_N\) are chosen in this order as the basis \(b_1,\ldots,b_N\), these matrices coincide with those in Sect.~\ref{sec:model-learning} if the same pairs are used with \(M=m\).
Positive definiteness of \(\mathbf{G}_{XX}^{(M,N)}\) is equivalent to the denominator positivity condition stated before Eq.~\eqref{eq:empirical-tau}, and under this condition the minimal residual has the following expression in terms of quadratic forms \pcite{colbrook2021rigorous,colbrook2023residual}:
\begin{equation}
  \begin{aligned}
  &\widehat{\tau}_{M,N}(\lambda)^2\\
  &\quad=
  \min_{\substack{\mathbf{a}\in\mathbb{C}^{d_N}\\\mathbf{a}\neq 0}}
  \frac{
    \mathbf{a}^\ast
    \left(
      \begin{aligned}
      &\mathbf{G}_{YY}^{(M,N)}
      -\lambda (\mathbf{G}_{XY}^{(M,N)})^\ast\\
      &\quad-\overline{\lambda} \mathbf{G}_{XY}^{(M,N)}
      +|\lambda|^2 \mathbf{G}_{XX}^{(M,N)}
      \end{aligned}
    \right)\mathbf{a}
  }{
      \mathbf{a}^\ast \mathbf{G}_{XX}^{(M,N)} \mathbf{a}
    }.
  \end{aligned}
  \label{eq:appendix_empirical_tau}
\end{equation}
With the point evaluations fixed, the minimum is independent of the choice of basis.

\subsubsection*{Assumptions for the data limit}
\begin{itemize}
\item[(i)] The snapshot pairs \((\mathbf{x}^{(i)},\mathbf{y}^{(i)})\) are generated from independent and identically distributed samples \(\mathbf{x}^{(i)}\sim\mu\); or
\item[(ii)] \(F\) is \(\mu\)-measure-preserving and ergodic, and the snapshot pairs are drawn from a single trajectory \(\mathbf{x}^{(i)}=F^{i}(\mathbf{x}^{(0)})\), with the trajectory averages of the integrable Gram observables converging to their integrals for almost every initial condition \(\mathbf{x}^{(0)}\).
\end{itemize}
\begin{proposition}[Convergence with respect to the number of snapshot pairs ($M\to\infty$)]
\label{prop:conv-data}
Let \(\mu\) be a probability measure, fix \(N\), \(V_N\), and a basis of \(V_N\) independently of the evaluation samples, and assume sampling condition (i) or (ii) above.
Then, on a common set of full measure, \(\mathbf{G}_{XX}^{(M,N)}\) is positive definite for all sufficiently large \(M\).
Under (i), this set is an event of probability one for the sample sequence; under (ii), it contains \(\mu\)-almost every initial condition.
On the same set, the following holds for every compact set \(\Lambda\subset\mathbb{C}\).
\[
\sup_{\lambda\in\Lambda}
\left|
\widehat{\tau}_{M,N}(\lambda)-\tau_N(\lambda)
\right|
\to 0
\qquad
(M\to\infty).
\]
\end{proposition}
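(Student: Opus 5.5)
The plan is to reduce everything to entrywise convergence of finitely many matrices, and then to a stability property of a generalized Rayleigh quotient.

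\textbf{Step 1: almost sure convergence of the Gram matrices.} Fix the basis \(b_1,\ldots,b_{d_N}\) and define the population matrices
\[
(\mathbf{G}_{XX})_{jk}=\langle b_k,b_j\rangle_{\mathcal{H}},\qquad
(\mathbf{G}_{XY})_{jk}=\langle \mathcal{K}b_k,b_j\rangle_{\mathcal{H}},\qquad
(\mathbf{G}_{YY})_{jk}=\langle \mathcal{K}b_k,\mathcal{K}b_j\rangle_{\mathcal{H}}.
\]
Each entry of the empirical matrices is a sample average of an integrable function.
\begin{itemize}
\item The entries of \(\mathbf{G}^{(M,N)}_{XX}\) average \(\overline{b_j}b_k\).
\item The entries of \(\mathbf{G}^{(M,N)}_{XY}\) average \(\overline{b_j}\,(b_k\circ F)\). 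This is integrable by Cauchy--Schwarz, because \(b_k\circ F=\mathcal{K}b_k\in\mathcal{H}\) by (A1). The fixed representatives agree with \(\mathcal{K}b_k\) \(\mu\)-a.e., so the evaluation at \(\mathbf{y}^{(i)}=F(\mathbf{x}^{(i)})\) is legitimate.
\item The entries of \(\mathbf{G}^{(M,N)}_{YY}\) average \(\overline{(b_j\circ F)}\,(b_k\circ F)\).
\end{itemize}
Under (i), the strong law of large numbers applies to each of these finitely many \(3d_N^2\) functions. Under (ii), Birkhoff's theorem applies, or equivalently the stated trajectory-average assumption. Intersecting the finitely many full-measure sets gives a common set on which all entries converge.

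\textbf{Step 2: positive definiteness.} The population matrix \(\mathbf{G}_{XX}\) is positive definite because the \(b_j\) are linearly independent in \(\mathcal{H}\). By continuity of the smallest eigenvalue, \(\lambda_{\min}(\mathbf{G}^{(M,N)}_{XX})\ge \lambda_{\min}(\mathbf{G}_{XX})/2>0\) for all large \(M\).

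\textbf{Step 3: matrix form of \(\tau_N\).} Expanding \(\|\mathcal{K}g-\lambda g\|^2\) for \(g=\sum_j a_jb_j\) gives
\[
\tau_N(\lambda)^2=\min_{\mathbf{a}\ne0}\frac{\mathbf{a}^*\mathbf{L}(\lambda)\mathbf{a}}{\mathbf{a}^*\mathbf{G}_{XX}\mathbf{a}},
\qquad
\mathbf{L}(\lambda)=\mathbf{G}_{YY}-\lambda\mathbf{G}_{XY}^*-\overline{\lambda}\mathbf{G}_{XY}+|\lambda|^2\mathbf{G}_{XX}.
\]
The empirical analogue \(\mathbf{L}_M(\lambda)\) gives Eq.~\eqref{eq:appendix_empirical_tau}.

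\textbf{Step 4: uniform comparison on \(\Lambda\).} This is the main technical point. Both quotients are unchanged by scaling, so substitute \(\mathbf{a}=\mathbf{G}_{XX}^{-1/2}\mathbf{b}\). Then \(\tau_N(\lambda)^2\) is the smallest eigenvalue of the Hermitian matrix \(\mathbf{G}_{XX}^{-1/2}\mathbf{L}(\lambda)\mathbf{G}_{XX}^{-1/2}\). Analogously, \(\widehat\tau_{M,N}(\lambda)^2\) is the smallest eigenvalue of \((\mathbf{G}^{(M,N)}_{XX})^{-1/2}\mathbf{L}_M(\lambda)(\mathbf{G}^{(M,N)}_{XX})^{-1/2}\).

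\begin{itemize}
\item Let \(R=\sup_{\Lambda}|\lambda|\). Then
\[
\sup_{\lambda\in\Lambda}\|\mathbf{L}_M(\lambda)-\mathbf{L}(\lambda)\|_2\le \|\Delta_{YY}\|_2+2R\|\Delta_{XY}\|_2+R^2\|\Delta_{XX}\|_2\to0,
\]
where \(\Delta_{YY}\), \(\Delta_{XY}\), \(\Delta_{XX}\) are the differences between empirical and population matrices.
\item \(\sup_\Lambda\|\mathbf{L}(\lambda)\|_2\) is bounded.
\item The inverse square root is continuous on positive definite matrices, so \((\mathbf{G}^{(M,N)}_{XX})^{-1/2}\to\mathbf{G}_{XX}^{-1/2}\).
\end{itemize}
Hence the two Hermitian matrices differ in spectral norm by a quantity tending to zero, uniformly on \(\Lambda\).

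Weyl's inequality then gives \(\sup_\Lambda|\widehat\tau_{M,N}^2-\tau_N^2|\to0\). To pass from squares to the residuals themselves, use \(|\sqrt{s}-\sqrt{t}|\le\sqrt{|s-t|}\) for \(s,t\ge0\). This completes the proof.

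One detail needs checking. The minimum in Eq.~\eqref{eq:appendix_empirical_tau} is basis independent because the point evaluations are fixed, so working in the chosen basis loses nothing.
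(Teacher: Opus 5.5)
Your proposal is correct and follows the paper's proof: entrywise almost-sure convergence of the finitely many Gram and cross-correlation entries via the strong law or Birkhoff's theorem, positive definiteness of the limiting \(\mathbf{G}_{XX}\) from linear independence of the basis, uniform convergence on compact \(\Lambda\) of the minimal generalized Rayleigh quotient, and the inequality \(|\sqrt{s}-\sqrt{t}|\le\sqrt{|s-t|}\) to remove the squares. The only difference is in how the minima are compared. The paper restricts to the coefficient unit sphere and uses a common positive lower bound on the empirical denominators. You instead whiten by \(\mathbf{G}_{XX}^{-1/2}\) and apply Weyl's inequality, which reaches the same conclusion and additionally gives an explicit perturbation bound.
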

\begin{proof}[Proof of Proposition~\ref{prop:conv-data}]
Since \(b_j,\mathcal{K}b_j\in\mathcal{H}\), the products below are integrable by the Cauchy--Schwarz inequality.
By the strong law of large numbers under condition (i) and Birkhoff's ergodic theorem under condition (ii), the following entrywise convergence holds on a set of full measure common to all \(j,k\) \pcite{colbrook2021rigorous}:
\[
\begin{aligned}
\lim_{M\to\infty}[\mathbf{G}_{XX}^{(M,N)}]_{jk}&=\int_\Omega b_k(\mathbf{x})\overline{b_j(\mathbf{x})}\,\mathrm{d}\mu(\mathbf{x}),\\
\lim_{M\to\infty}[\mathbf{G}_{XY}^{(M,N)}]_{jk}&=\int_\Omega b_k(F(\mathbf{x}))\overline{b_j(\mathbf{x})}\,\mathrm{d}\mu(\mathbf{x}),\\
\lim_{M\to\infty}[\mathbf{G}_{YY}^{(M,N)}]_{jk}&=\int_\Omega b_k(F(\mathbf{x}))\overline{b_j(F(\mathbf{x}))}\,\mathrm{d}\mu(\mathbf{x}).
\end{aligned}
\]
Since only finitely many matrix entries are involved, the three matrices also converge in norm.
By linear independence of the basis, the limit of \(\mathbf{G}_{XX}^{(M,N)}\) is positive definite, and \(\mathbf{G}_{XX}^{(M,N)}\) is also positive definite for all sufficiently large \(M\).
The quotient in Eq.~\eqref{eq:appendix_empirical_tau} is homogeneous in \(\mathbf{a}\), so the minimization can be restricted to the coefficient unit sphere, and the minimum of the population quotient is \(\tau_N(\lambda)^2\).
Since the limiting Gram matrix is positive definite, the empirical denominators are bounded below by a common positive constant on the coefficient unit sphere for all sufficiently large \(M\).
The convergence of the Gram and cross-correlation matrices then gives uniform convergence of the Rayleigh quotients on the product of this sphere and any compact set \(\Lambda\subset\mathbb{C}\), and hence of their minima.
Finally, the inequality \(|\sqrt{u}-\sqrt{v}|\le\sqrt{|u-v|}\) for nonnegative \(u,v\) gives uniform convergence of \(\widehat{\tau}_{M,N}\) to \(\tau_N\) on \(\Lambda\), including at points of zero residual.
\end{proof}

\subsubsection*{Assumption for the latent-dimension limit}
Density results for feedforward networks are classical \pcite{hornik1991,leshno1993nonpolynomial}, and for real-valued ReLU networks with two hidden layers, density in \(L^2(\mathbb{R}^{d_x})\) with Lebesgue measure is known \pcite{wang2022unbounded}.
Here, Assumption (A3) plays the role of the core condition in \tcite[Appendix~B]{colbrook2021rigorous}.

\begin{proof}[Proof of Proposition~\ref{prop:conv-latent}]
Fix \(g\in\mathcal{H}\) and \(\delta>0\).
Because \(E_N(g)\to0\), each sufficiently large \(N\) admits \(v_N\in V_N\) satisfying \(\|g-v_N\|_{\mathcal{H}}<\delta/(1+\|\mathcal{K}\|)\), and boundedness gives
\[
\|g-v_N\|_\mathcal{H}+\|\mathcal{K}g-\mathcal{K}v_N\|_\mathcal{H}
\leq
(1+\|\mathcal{K}\|)\|g-v_N\|_\mathcal{H}
<\delta.
\]
\noindent Next, fix \(\lambda\in\mathbb{C}\) and \(\eta>0\), and choose \(g\in\mathcal{H}\) such that \(\|g\|_{\mathcal{H}}=1\) and \(\operatorname{res}(\lambda,g)<\tau(\lambda)+\eta\) \pcite[Appendix~B]{colbrook2021rigorous}.
By Assumption (A3), we can choose \(v_N\in V_N\) such that \(v_N\to g\), and boundedness gives
\[
\|(\mathcal{K}-\lambda I)(v_N-g)\|_\mathcal{H}
\le(\|\mathcal{K}\|+|\lambda|)\|v_N-g\|_\mathcal{H}
\to0.
\]
Moreover, since \(\|v_N\|_{\mathcal{H}}\to1\), for sufficiently large \(N\) we have \(v_N\ne0\) and \(\operatorname{res}(\lambda,v_N)\to\operatorname{res}(\lambda,g)\).
From the inclusion \(V_N\subset\mathcal{H}\) and \(\tau_N(\lambda)\le\operatorname{res}(\lambda,v_N)\), we obtain
\[
\begin{aligned}
\tau(\lambda)
&\le\liminf_{N\to\infty}\tau_N(\lambda)
\le\limsup_{N\to\infty}\tau_N(\lambda)\\
&\le\operatorname{res}(\lambda,g)
<\tau(\lambda)+\eta.
\end{aligned}
\]
Since \(\eta>0\) was arbitrary, we obtain \(\tau_N(\lambda)\to\tau(\lambda)\).
Therefore, if \(\tau(\lambda)<\epsilon\), then \(\lambda\in\sigma_{N,\epsilon,\mathrm{ap}}(\mathcal{K})\) for all sufficiently large \(N\).
\end{proof}

\subsubsection*{Approximate point pseudospectra}
Proposition~\ref{prop:conv-latent} and the proposition below concern \(\sigma_{N,\epsilon,\mathrm{ap}}(\mathcal{K})\), \(\sigma_{\epsilon,\mathrm{ap}}(\mathcal{K})\), and \(\sigma_{\mathrm{ap}}(\mathcal{K})\), defined in Sect.~\ref{sec:residuals-spectral-diagnostics} \pcite[Appendix~B]{colbrook2021rigorous}.
For a general nonnormal operator, treating points in the residual spectrum may require a construction using the corresponding adjoint operator.
Here, \(\sigma_{\mathrm{ap}}(\mathcal{K})\) coincides with the standard approximate point spectrum of a bounded operator on a nonzero Hilbert space, the strict sublevel sets are open, and \(\sigma_{\mathrm{ap}}(\mathcal{K})\) is closed.

\begin{proposition}[Zero-residual characterization ($\epsilon\downarrow0$)]
\label{prop:conv-epsilon}
Under Assumptions (A1)--(A4), on a common set of full measure for all \(N\), \(\lim_{N\to\infty}\lim_{M\to\infty}\widehat{\tau}_{M,N}(\lambda)=\tau(\lambda)\) for every \(\lambda\in\mathbb{C}\).
For each \(\lambda\in\mathbb{C}\),
\[
\begin{aligned}
\lambda\in\sigma_{\mathrm{ap}}(\mathcal{K})
&\iff\tau(\lambda)=0\\
&\iff\bigl[\forall\epsilon>0,\ \lambda\in\sigma_{\epsilon,\mathrm{ap}}(\mathcal{K})\bigr].
\end{aligned}
\]
\end{proposition}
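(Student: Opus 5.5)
The plan is to combine the two earlier limits and then use elementary Hilbert-space facts for the equivalences.

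For the iterated limit, fix the common full-measure set from Proposition~\ref{prop:conv-data}. There is one such set for each \(N\), and since \(N\) ranges over a countable set, their intersection still has full measure. On this set, Proposition~\ref{prop:conv-data} applied with the compact set \(\Lambda=\{\lambda\}\) gives \(\lim_{M\to\infty}\widehat{\tau}_{M,N}(\lambda)=\tau_N(\lambda)\) for every \(N\) and every \(\lambda\). Assumption (A4) supplies exactly the Gram convergence used there. Proposition~\ref{prop:conv-latent} then gives \(\tau_N(\lambda)\to\tau(\lambda)\), and (A1)--(A3) are exactly its hypotheses. So the iterated limit equals \(\tau(\lambda)\) on the common set.

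For the equivalences, I would argue as follows.

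\textbf{Second equivalence.} The definition gives \(\tau(\lambda)\ge0\). Hence \(\tau(\lambda)<\epsilon\) for all \(\epsilon>0\) holds if and only if \(\tau(\lambda)=0\). By the definition of \(\sigma_{\epsilon,\mathrm{ap}}(\mathcal{K})\), this also shows \(\bigcap_{\epsilon>0}\sigma_{\epsilon,\mathrm{ap}}(\mathcal{K})=\{\tau=0\}\).

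\textbf{First equivalence.} The paper defines \(\sigma_{\mathrm{ap}}(\mathcal{K})=\{\tau=0\}\), so in that sense it is immediate. I would also check that it agrees with the standard approximate point spectrum. Suppose \(\tau(\lambda)=0\). Choose \(g_n\) with \(\|g_n\|=1\) and \(\|(\mathcal{K}-\lambda I)g_n\|\to0\). Then \(\mathcal{K}-\lambda I\) is not bounded below, which is the standard definition. Conversely, if \(\mathcal{K}-\lambda I\) is not bounded below, normalizing a violating sequence gives \(\tau(\lambda)=0\). Here \(\mathcal{H}\neq\{0\}\), because \(\mu\) is a probability measure and constants are nonzero.

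Combined with the first part, this yields the characterization in Corollary~\ref{cor:ordered-limits}: \(\lambda\in\sigma_{\mathrm{ap}}(\mathcal{K})\) if and only if \(\lim_N\lim_M\widehat{\tau}_{M,N}(\lambda)=0\).

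The main obstacle is the bookkeeping of null sets. The full-measure set must not depend on \(\lambda\), and this is why I use the uniform-on-compacts statement of Proposition~\ref{prop:conv-data} rather than a pointwise one. It must also be common to all \(N\), which is handled by the countable intersection. A second subtlety is that \(\widehat{\tau}_{M,N}\) is defined only once \(\mathbf{G}^{(M,N)}_{XX}\) is positive definite. Proposition~\ref{prop:conv-data} guarantees this for all sufficiently large \(M\) on the set, so the inner limit is well defined.
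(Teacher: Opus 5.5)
Your proposal is correct and follows the paper's proof. You rerun the matrix-convergence argument of Proposition~\ref{prop:conv-data} under (A4) for each \(N\), intersect the countably many full-measure sets, apply Proposition~\ref{prop:conv-latent} for the outer limit, and read off the equivalences from \(\tau(\lambda)\ge0\) and the definition \(\sigma_{\mathrm{ap}}(\mathcal{K})=\{\lambda\in\mathbb{C}:\tau(\lambda)=0\}\); your extra check that this set is the standard approximate point spectrum (which the paper only asserts) and the paper's extra \(\epsilon\)--\(N\)--\(M\) restatement are both harmless additions.
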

\begin{proof}[Proof of Proposition~\ref{prop:conv-epsilon}]
Under Assumption (A4), the argument used in the proof of Proposition~\ref{prop:conv-data} to deduce uniform convergence from matrix convergence shows that, for each \(N\), there is a set of full measure on which the convergence \(\widehat{\tau}_{M,N}\to\tau_N\) is uniform on every compact subset of \(\mathbb{C}\).
On the countable intersection of these sets over \(N\), Proposition~\ref{prop:conv-latent} gives \(\lim_{N\to\infty}\lim_{M\to\infty}\widehat{\tau}_{M,N}(\lambda)=\tau(\lambda)\) for every \(\lambda\in\mathbb{C}\).
From Eq.~\eqref{eq:spectral-sets} and \(\tau(\lambda)\ge0\), we obtain
\[
\bigcap_{\epsilon>0}\sigma_{\epsilon,\mathrm{ap}}(\mathcal{K})
=\{\lambda\in\mathbb{C}:\tau(\lambda)=0\}
=\sigma_{\mathrm{ap}}(\mathcal{K}).
\]
If \(\tau(\lambda)=0\), then for every \(\epsilon>0\), all sufficiently large \(N\) satisfy \(\tau_N(\lambda)<\epsilon/2\), and for each such \(N\), all sufficiently large \(M\) satisfy \(\widehat{\tau}_{M,N}(\lambda)<\epsilon\).
Conversely, if \(\tau(\lambda)>0\), choose \(0<\epsilon<\tau(\lambda)\); the same successive limits then give \(\widehat{\tau}_{M,N}(\lambda)>\epsilon\) for all sufficiently large \(N\) and, for each such \(N\), all sufficiently large \(M\).
\end{proof}

\subsection*{Reconstruction and exact latent collisions}
The bound applies to each trained model with its own \(e_{\mathrm{rec}}\).

\begin{lemma}[A pointwise reconstruction bound limits exact latent collisions]
\label{lem:reconstruction-excludes-collapse}
Assume the following for every sampled observation vector \(\mathbf{x}\) under consideration.
\begin{equation}
  \|\mathbf{x}-\psi_{\theta_{\mathrm{d}}}(\phi_{\theta_{\mathrm{e}}}(\mathbf{x}))\|_2
  \leq
  e_{\mathrm{rec}}.
  \label{eq:appendix_reconstruction_bound}
\end{equation}
If $\mathbf{x}^{(i)}$ and $\mathbf{x}^{(j)}$ satisfy $\phi_{\theta_{\mathrm{e}}}(\mathbf{x}^{(i)})=\phi_{\theta_{\mathrm{e}}}(\mathbf{x}^{(j)})$, then
\begin{equation}
  \|\mathbf{x}^{(i)}-\mathbf{x}^{(j)}\|_2
  \leq
  2e_{\mathrm{rec}}.
  \label{eq:appendix_collapse_bound}
\end{equation}
\end{lemma}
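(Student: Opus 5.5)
The argument is a single application of the triangle inequality through the common decoded point. Write $\mathbf{z}:=\phi_{\theta_{\mathrm{e}}}(\mathbf{x}^{(i)})=\phi_{\theta_{\mathrm{e}}}(\mathbf{x}^{(j)})$ for the shared latent vector. Because the decoder $\psi_{\theta_{\mathrm{d}}}$ is a function of the latent vector alone, both reconstructions coincide:
\[
\psi_{\theta_{\mathrm{d}}}(\phi_{\theta_{\mathrm{e}}}(\mathbf{x}^{(i)}))=\psi_{\theta_{\mathrm{d}}}(\mathbf{z})=\psi_{\theta_{\mathrm{d}}}(\phi_{\theta_{\mathrm{e}}}(\mathbf{x}^{(j)})).
\]
Set $\widehat{\mathbf{x}}:=\psi_{\theta_{\mathrm{d}}}(\mathbf{z})\in\mathbb{R}^{d_x}$.

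Next, insert $\widehat{\mathbf{x}}$ between the two observation vectors and apply the triangle inequality in $\|\cdot\|_2$:
\[
\|\mathbf{x}^{(i)}-\mathbf{x}^{(j)}\|_2\le\|\mathbf{x}^{(i)}-\widehat{\mathbf{x}}\|_2+\|\widehat{\mathbf{x}}-\mathbf{x}^{(j)}\|_2 .
\]
Since $\mathbf{x}^{(i)}$ and $\mathbf{x}^{(j)}$ are among the sampled observation vectors under consideration, the pointwise bound \eqref{eq:appendix_reconstruction_bound} applies to each term. Each term is therefore at most $e_{\mathrm{rec}}$, which gives \eqref{eq:appendix_collapse_bound}.

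There is no substantive obstacle. The only point needing care is that \eqref{eq:appendix_reconstruction_bound} is a pointwise (worst-case) hypothesis rather than the averaged loss \eqref{eq:two_stage_reconstruction_loss}, so it must be assumed at both $\mathbf{x}^{(i)}$ and $\mathbf{x}^{(j)}$. I would also note in a remark that this worst-case bound is what rules out collapse of distant points to the same latent coordinates; a small mean reconstruction loss by itself would not do so.
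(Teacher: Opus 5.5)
Your proof is correct and is essentially the paper's argument: equal latent vectors give identical decoder outputs, and the triangle inequality through that common reconstruction, with the pointwise bound applied at both $\mathbf{x}^{(i)}$ and $\mathbf{x}^{(j)}$, yields $2e_{\mathrm{rec}}$. Your remark that the hypothesis must be pointwise rather than the averaged loss is consistent with how the paper states the lemma.
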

\begin{proof}[Proof of Lemma~\ref{lem:reconstruction-excludes-collapse}]
If $\phi_{\theta_{\mathrm{e}}}(\mathbf{x}^{(i)})=\phi_{\theta_{\mathrm{e}}}(\mathbf{x}^{(j)})$, the decoder outputs coincide.
Therefore
\[
\begin{aligned}
\|\mathbf{x}^{(i)}-\mathbf{x}^{(j)}\|_2
&\leq
\|\mathbf{x}^{(i)}-\psi_{\theta_{\mathrm{d}}}(\phi_{\theta_{\mathrm{e}}}(\mathbf{x}^{(i)}))\|_2
\\
&\quad+
\|\mathbf{x}^{(j)}-\psi_{\theta_{\mathrm{d}}}(\phi_{\theta_{\mathrm{e}}}(\mathbf{x}^{(j)}))\|_2
\\
&\leq
2e_{\mathrm{rec}}.
\end{aligned}
\]
\end{proof}

\section{Training and baseline settings}
\label{app:settings}
\subsection*{Common preparation}
\subsubsection*{Initial conditions}
The initial conditions are based on \pcite{brunton2017havok} for R\"ossler, Duffing, and Mackey--Glass, \pcite{vlachas2020backpropagation} for Lorenz-63, \pcite{tsuyuki2022nonlinear} for Lorenz-96, and \pcite{otto2019linearly,su2026koopgen} for Kuramoto--Sivashinsky.
For R\"ossler, Lorenz-63, Duffing, and Lorenz-96, we add independent zero-mean Gaussian perturbations with standard deviation \(1\) to each component of the reference states \((1,1,1)\), \((1,1,1)\), \((0,0.01)\), and the state with all components equal to \(F_{\mathrm{L96}}\), respectively.
For Mackey--Glass, the initial history is the constant \(\max(0.5+\xi,10^{-6})\), where \(\xi\) is a standard normal random variable.
For Kuramoto--Sivashinsky, we set \(\hat u_k=0.1(\xi_k+i\eta_k)\) for Fourier modes \(k=1,2,3\), use their conjugates for negative modes, and set all other coefficients to zero, where \(\xi_k\) and \(\eta_k\) are independent standard normal random variables.
The step counts in the Steps column of Table~\ref{tab:system-settings} include the burn-in steps, which are discarded before the data are split, and random seeds vary the initial-condition perturbations and minibatch sampling.

\subsubsection*{Computational environment}
All experiments ran on NVIDIA RTX 6000 Ada graphics processing units (GPUs).

\subsection*{Koopman autoencoder settings}
\begin{table*}[t]
\centering
\caption{Training parameters for the Koopman autoencoders.}
\label{tab:opt-settings}
\begin{tabular}{@{}ll@{}}
\toprule
Parameter & Value \\
\midrule
Hidden layers per network & \(2\) \\
Hidden width & \(N\) \\
Activation & ReLU \\
Optimizer & AdamW \\
Learning rate & \(10^{-4}\) \\
Batch size & \(2{,}048\) \\
\bottomrule
\end{tabular}
\end{table*}
\subsubsection*{Training protocol}
We train for 200 epochs using the settings in Table~\ref{tab:opt-settings}, with \(20\) warm-up epochs followed by \(180\) epochs of cosine learning-rate decay and weight decay of \(10^{-2}\).
The encoder and decoder are initialized with autoencoder weights pretrained using a reconstruction loss and a learning rate of \(3\times10^{-4}\) for \(300{,}000\) updates on a separate trajectory of the same system at the same latent dimension.
Two-stage training allows at most \(J_1=J_2=250\) updates per stage in each epoch, and joint training allows at most \(250\) updates per epoch.
In two-stage training, each stage stops when the relative decrease between the mean losses in the two halves of a \(150\)-update window, expressed per \(100\) updates, is at most \(1\%\).
For each operator estimate, we draw \(2{,}048\) pairs for operator estimation and \(2{,}048\) pairs for evaluating the evolution loss from two disjoint pools formed by assigning successive temporal blocks of \(2\) LT alternately to the two pools.
The regularization parameter is \(\varepsilon=10^{-6}\|\mathbf{Z}_X\|_F^2/N\), and the auxiliary comparisons use the operator-norm penalty weight \(w_{\mathrm{op}}=0.01\).
\subsubsection*{Operators for rollouts and validation}
Both models estimate \(\widehat{\mathbf{A}}_{\mathrm{tr}}\) from all \(M_{\mathrm{tr}}\) encoded training pairs \(\mathbf{Z}_{X,\mathrm{tr}},\mathbf{Z}_{Y,\mathrm{tr}}\in\mathbb{R}^{N\times M_{\mathrm{tr}}}\) using Eq.~\eqref{eq:latent_operator} with regularization parameter \(\varepsilon_{\mathrm{tr}}=10^{-6}\|\mathbf{Z}_{X,\mathrm{tr}}\|_F^2/N\).
The spectral-residual model uses \(\widehat{\mathbf{A}}_{\mathrm{tr}}\) for rollouts.
For latent prediction, rollouts with \(\widehat{\mathbf{A}}_{\mathrm{tr}}\) diverged, so we use a rank-constrained operator.
In the SVD \(\mathbf{Z}_{X,\mathrm{tr}}=\mathbf{U}_{\mathrm{tr}}\boldsymbol{\Sigma}_{\mathrm{tr}}\mathbf{V}_{\mathrm{tr}}^\top\), with singular values \(\sigma_j\) in nonincreasing order, we choose the smallest rank \(r\) that retains at least \(99.9\%\) of the sum of the squared singular values.
With \(\mathbf{U}_{\mathrm{tr},r}\) and \(\mathbf{V}_{\mathrm{tr},r}\) denoting the first \(r\) singular-vector columns and \(\boldsymbol{\Sigma}_{\mathrm{tr},r}\) the diagonal matrix of the retained singular values, let \(\mathbf{Q}_r=\mathbf{Z}_{Y,\mathrm{tr}}\mathbf{V}_{\mathrm{tr},r}\boldsymbol{\Sigma}_{\mathrm{tr},r}(\boldsymbol{\Sigma}_{\mathrm{tr},r}^2+\varepsilon_{\mathrm{tr}}\mathbf{I}_r)^{-1}\) and \(\widehat{\mathbf{A}}_{\mathrm{tr},r}=\mathbf{Q}_r \mathbf{U}_{\mathrm{tr},r}^\top \mathbf{Q}_r \mathbf{Q}_r^\dagger\).
Here \(\mathbf{I}_r\) is the \(r\times r\) identity matrix and \(\dagger\) denotes the Moore--Penrose pseudoinverse.
Validation uses this rank-constrained construction for both models, with \(r\) equal to the number of singular directions satisfying \(\sigma_j^2\ge\varepsilon_{\mathrm{tr}}\).
The eigenvalues overlaid on the residual contours are those of \(\widehat{\mathbf{A}}_{\mathrm{tr}}\) for both models.
\subsubsection*{Residual contours}
In the SVD described in Sect.~\ref{sec:metrics-analysis}, we retain the singular values satisfying \(\sigma_j>10^{-10}\sigma_1\), where \(\sigma_1\) is the largest singular value.
We evaluate the residual contours on a \(150\times150\) grid over \([-1.45,1.45]^2\), with the color range determined separately for each figure from the combined values of its eight panels.
\subsection*{Baseline settings}
We select the Neural ODE and Consistent KAE models with the lowest VRMSE on the validation interval.
\subsubsection*{Latent-state Neural ODE}
The encoder, decoder, and latent vector field of the Neural ODE \pcite{chen2018neuralode} have input and output dimensions \(d_x,N\), \(N,d_x\), and \(N,N\), respectively, with one hidden layer each in the encoder and decoder and two hidden layers in the latent vector field.
All hidden layers have width \(128\) and \(\tanh\) activations, all output layers are linear, and we integrate the latent ODE with dopri5.
We train with Adam at a constant learning rate of \(10^{-3}\) for \(300\) epochs with batch size \(8{,}192\), using every training starting point once per epoch in minibatches.
The training loss is the mean squared error between observations and decoded values over \(n\) steps of autonomous evolution from an encoded starting point.
For training and validation, we use \(n=q\).

\subsubsection*{Kernel DMD with random Fourier features}
Kernel DMD uses RFF \pcite{rahimi2007random,williams2015data,degennaro2019scalable} with the RBF kernel \(\exp(-\gamma\|\mathbf{x}-\mathbf{x}'\|_2^2)\) (\(\gamma=1.0\)) and fits on all training pairs.
The representation size is \(N=2D\).
We sample the frequencies \(\mathbf{W}_{\mathrm{RFF}}\in\mathbb{R}^{D\times d_x}\) with normally distributed entries of mean zero and variance \(2\gamma\), and the feature map is \(\phi_{\mathrm{RFF}}(\mathbf{x})=D^{-1/2}[\cos(\mathbf{W}_{\mathrm{RFF}}\mathbf{x})^\top,\sin(\mathbf{W}_{\mathrm{RFF}}\mathbf{x})^\top]^\top\in\mathbb{R}^{2D}\).
Using the same training snapshots, we learn a linear readout and use it to reconstruct the input state from the observables advanced by repeated application of the EDMD operator.
For the reported rollouts, we retain the sampled features and estimate the EDMD operator at the smallest rank retaining \(99.9\%\) of the sum of the squared singular values of the training feature matrix.

\subsubsection*{Consistent KAE}
Consistent KAE \pcite{azencot2020forecasting} combines an MLP encoder and decoder connecting observation dimension \(d_x\) and latent dimension \(N\) with \(N\times N\) forward and backward evolution matrices.
Each network has two hidden layers of width \(N\) with \(\tanh\) activations, and the decoder output layer is linear, as in the original paper.
Forward and backward evolution each use \(4\) steps, with loss weights of \(1\) for forward evolution, \(1\) for reconstruction, \(0.01\) for backward evolution, and \(0.001\) for consistency.
We train with AdamW at an initial learning rate of \(10^{-3}\) and weight decay \(0\) (the default of the original implementation) for \(100\) epochs with batch size \(1{,}024\), multiplying the learning rate by \(0.2\) at epochs \(5\), \(33\), \(67\), and \(83\).
Dimension-sweep evaluation projects the learned forward operator onto the training latent subspace using the same \(99.9\%\) energy threshold.

\subsubsection*{Echo state network}
The \(\tanh\) reservoir \pcite{pathak2018model} uses a spectral radius of \(0.9\), a leaking rate of \(0.3\), input scaling of \(1.0\), input and reservoir connectivity of \(0.1\), and a regularization coefficient of \(10^{-6}\).
After discarding the initial \(100\) training steps, we fit the ESN readout by closed-form ridge regression to predict the one-step-ahead input vector from the reservoir state.
For prediction, we reset the reservoir state to zero and feed it up to \(q\) preceding observation vectors in sequence, followed by the starting observation vector.
We then advance it autoregressively.

%% file: declarations.tex
\section*{Statements and Declarations}
\subsection*{Funding}
This work was supported in part by JSPS KAKENHI Grant Numbers JP24K20864 and JP26H02498 from the Japan Society for the Promotion of Science and by JST Moonshot R\&D Program Grant Number JPMJMS263K from the Japan Science and Technology Agency.
\subsection*{Competing Interests}
The authors have no relevant financial or non-financial interests to disclose.
\subsection*{Author Contributions}
I.S., Y.M., and Y.K. conceived the study and developed the method.
Y.K. supervised the project.
I.S. performed the theoretical analysis, conducted the experiments, and wrote the first draft.
I.S. and Y.M. implemented the software.
All authors reviewed and edited the manuscript, and read and approved the final manuscript.
\subsection*{Data Availability}
The code and configuration files used to generate the data and the numerical results included in this paper are available at \url{https://github.com/uosaka-mlsyslab/koopman-autoencoder-spectral-reliability}.

%% file: supplement_body.tex
This supplement provides additional figures for the two analyses in Sect.~5 of the main text.
Sections~\ref{supp:lorenz63}--\ref{supp:ks} pair spectral plots for eight model variants with predictions at latent dimension \(16N_0\) for the five systems other than R\"ossler.
The supplemental figures follow the procedures for residual contours and predictions described in the main text.
Time is expressed in Lyapunov time (LT), the inverse of the largest Lyapunov exponent of each system.

\clearpage
\section{Lorenz-63 system}
\label{supp:lorenz63}
Figures~\ref{fig:supp-pseudospectra-lorenz63} and~\ref{fig:supp-decoded-lorenz63} show the spectral plots and predictions for Lorenz-63.
In the prediction figure, the spectral-residual model with two-stage training captures the overall shape of the reference attractor at the displayed prediction horizons up to 1.0 LT, and two lobes smaller than those of the reference attractor can be distinguished (Sect.~5 of the main text).

\begin{figure}[htbp]
\centering
\includegraphics[scale=1]{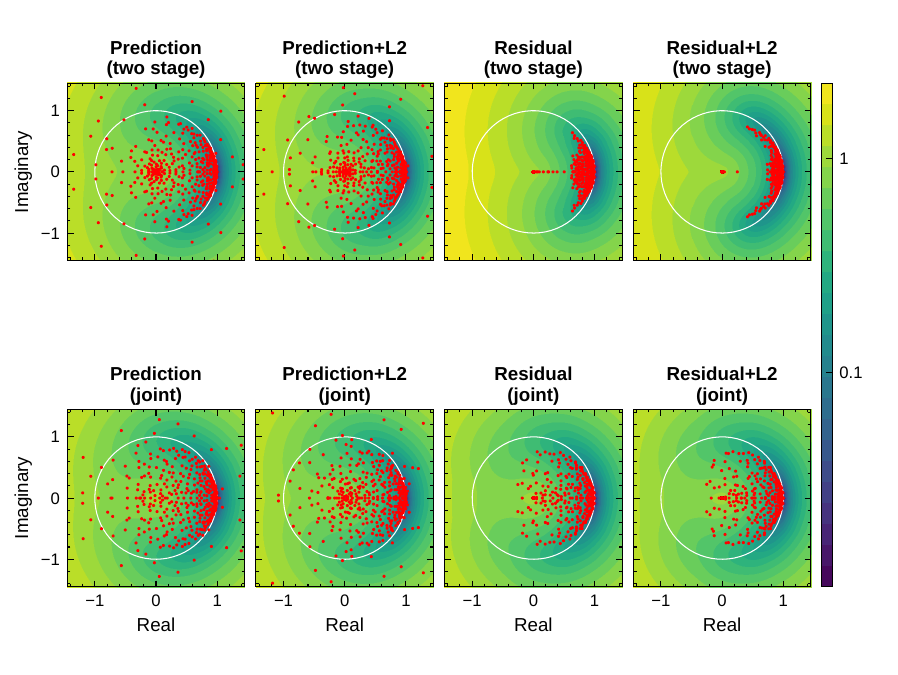}
\caption{Minimal residuals \(\widehat{\tau}^{\mathrm{num}}_{M,N}\) and eigenvalues for eight model variants in Lorenz-63 at latent dimension \(16N_0\).
Colors show, on a logarithmic scale, contours of the minimal residual computed over the retained numerical subspace, with the same color range across all panels.
White curves mark the unit circle, and red points mark the estimated Koopman eigenvalues within the plotted range.
The top row shows two-stage training, and the bottom row shows joint training.
From left to right, the columns show Prediction (the latent-prediction model), Prediction+L2, Residual (the spectral-residual model), and Residual+L2; +L2 adds the operator-norm penalty}
\label{fig:supp-pseudospectra-lorenz63}
\end{figure}

\begin{figure}[htbp]
\centering
\includegraphics[scale=1]{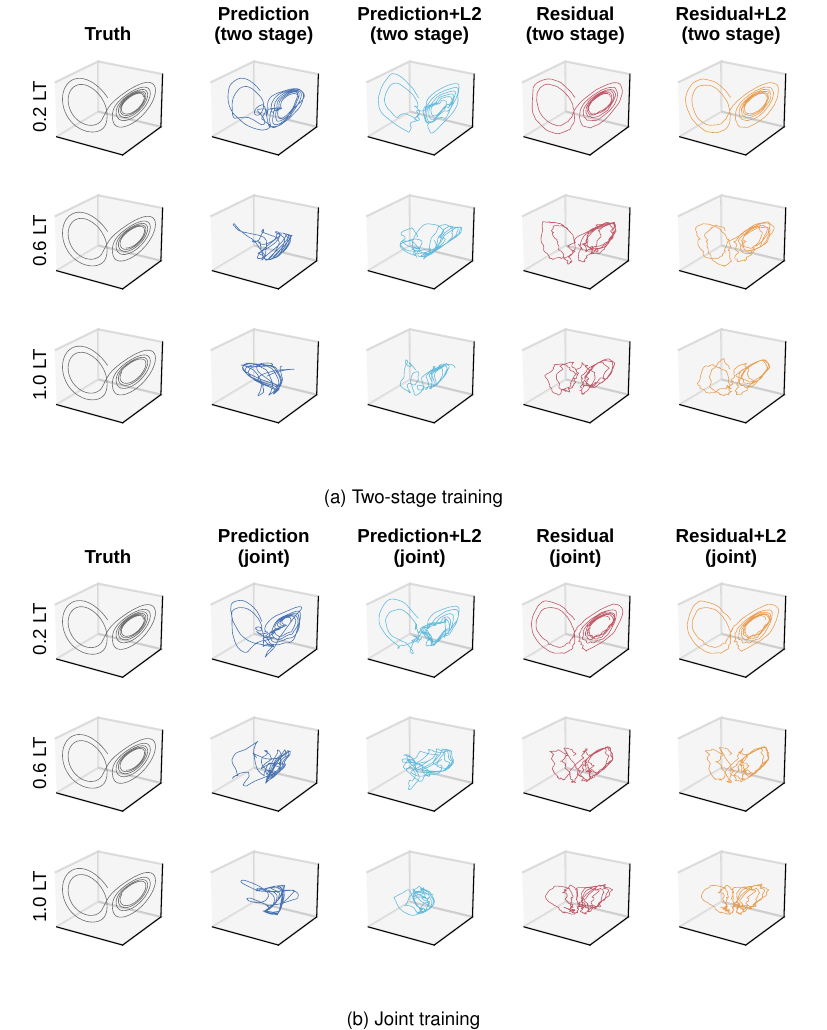}
\caption{Predictions for Lorenz-63 at latent dimension \(16N_0\) (top: two-stage training; bottom: joint training).
The predictions are plotted in \((x,y,z)\) coordinates.
From left to right, the columns show the reference trajectory, Prediction (the latent-prediction model), Prediction+L2, Residual (the spectral-residual model), and Residual+L2; +L2 adds the operator-norm penalty.
Each row corresponds to one prediction horizon (0.2, 0.6, or 1.0 LT), and predictions are made that far ahead from different observation vectors in the reference trajectory.
The prediction target times fall within the first 5 LT of the test trajectory.
Axis limits follow the reference trajectory and are shared across all panels}
\label{fig:supp-decoded-lorenz63}
\end{figure}

\clearpage
\section{Duffing system}
\label{supp:duffing}
Figures~\ref{fig:supp-pseudospectra-duffing} and~\ref{fig:supp-decoded-duffing} show the spectral plots and predictions for Duffing.
In the prediction figure, the spectral-residual model with two-stage training captures the overall shape of the reference attractor at the displayed prediction horizons up to 1.0 LT, and predictions from the same model with joint training have a noticeably smaller extent from 0.6 LT onward (Sect.~5 of the main text).

\begin{figure}[htbp]
\centering
\includegraphics[scale=1]{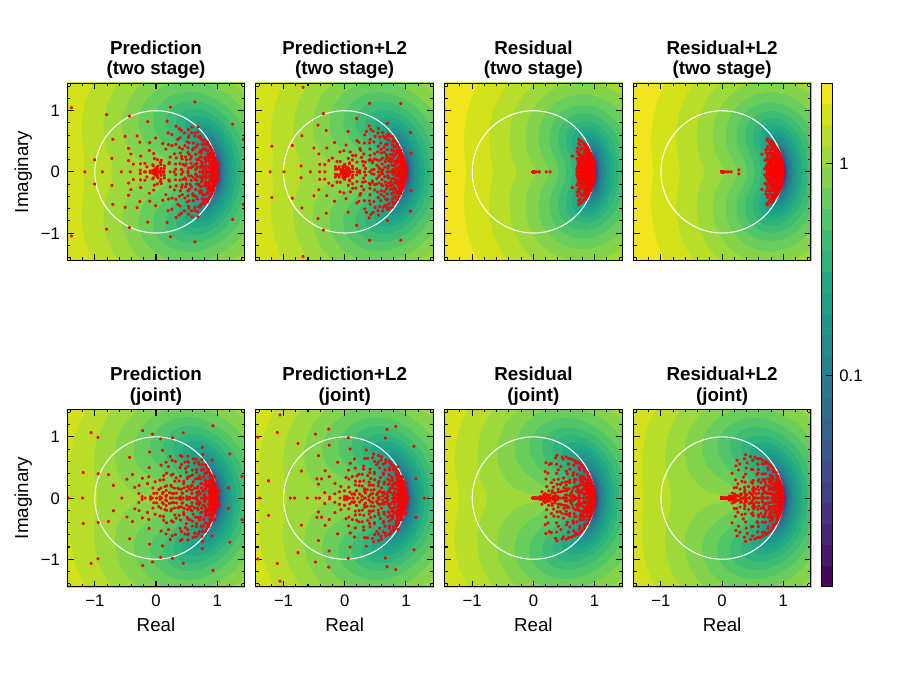}
\caption{Minimal residuals \(\widehat{\tau}^{\mathrm{num}}_{M,N}\) and eigenvalues for eight model variants in Duffing at latent dimension \(16N_0\).
Colors show, on a logarithmic scale, contours of the minimal residual computed over the retained numerical subspace, with the same color range across all panels.
White curves mark the unit circle, and red points mark the estimated Koopman eigenvalues within the plotted range.
The top row shows two-stage training, and the bottom row shows joint training.
From left to right, the columns show Prediction (the latent-prediction model), Prediction+L2, Residual (the spectral-residual model), and Residual+L2; +L2 adds the operator-norm penalty}
\label{fig:supp-pseudospectra-duffing}
\end{figure}

\begin{figure}[htbp]
\centering
\includegraphics[scale=1]{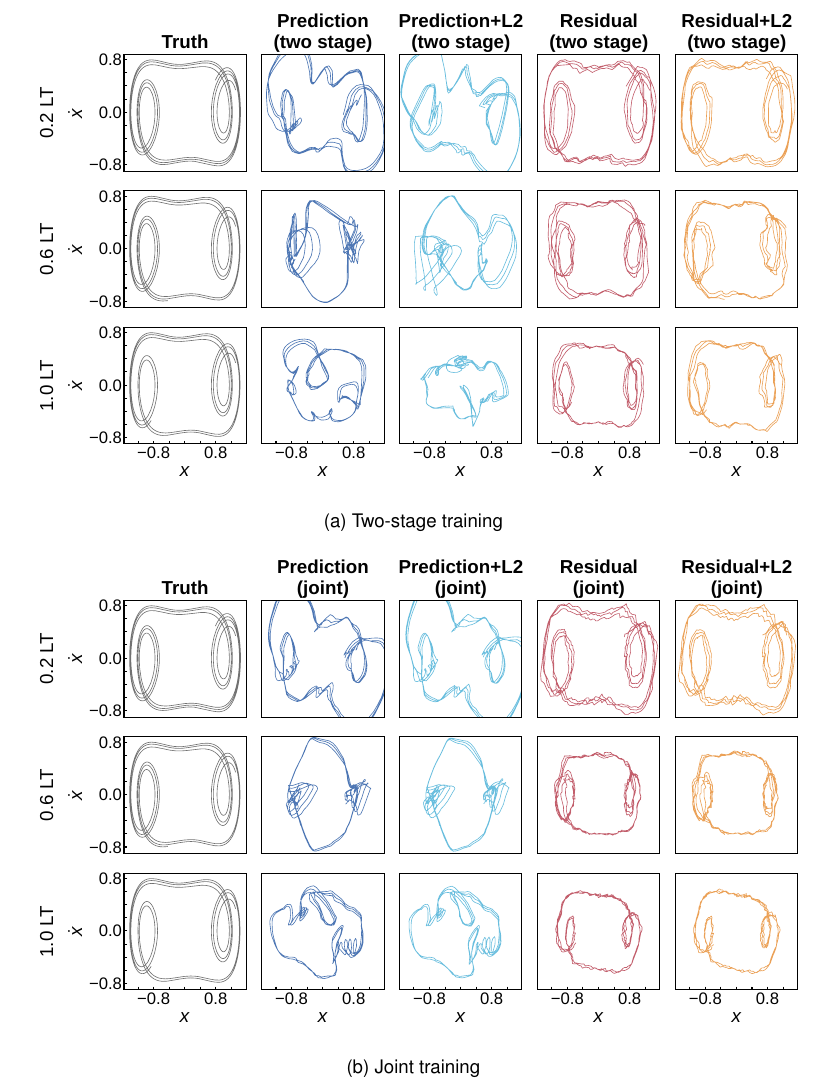}
\caption{Predictions for Duffing at latent dimension \(16N_0\) (top: two-stage training; bottom: joint training).
The predictions are plotted in the \((x,\dot{x})\) plane.
From left to right, the columns show the reference trajectory, Prediction (the latent-prediction model), Prediction+L2, Residual (the spectral-residual model), and Residual+L2; +L2 adds the operator-norm penalty.
Each row corresponds to one prediction horizon (0.2, 0.6, or 1.0 LT), and predictions are made that far ahead from different observation vectors in the reference trajectory.
The prediction target times fall within the first 5 LT of the test trajectory.
Axis limits follow the reference trajectory and are shared across all panels}
\label{fig:supp-decoded-duffing}
\end{figure}

\clearpage
\section{Mackey--Glass system}
\label{supp:mackeyglass}
Figures~\ref{fig:supp-pseudospectra-mackeyglass} and~\ref{fig:supp-decoded-mackeyglass} show the spectral plots and predictions for Mackey--Glass.
In the spectral-residual panels for two-stage training, low-residual regions form a band along the unit circle.

\begin{figure}[htbp]
\centering
\includegraphics[scale=1]{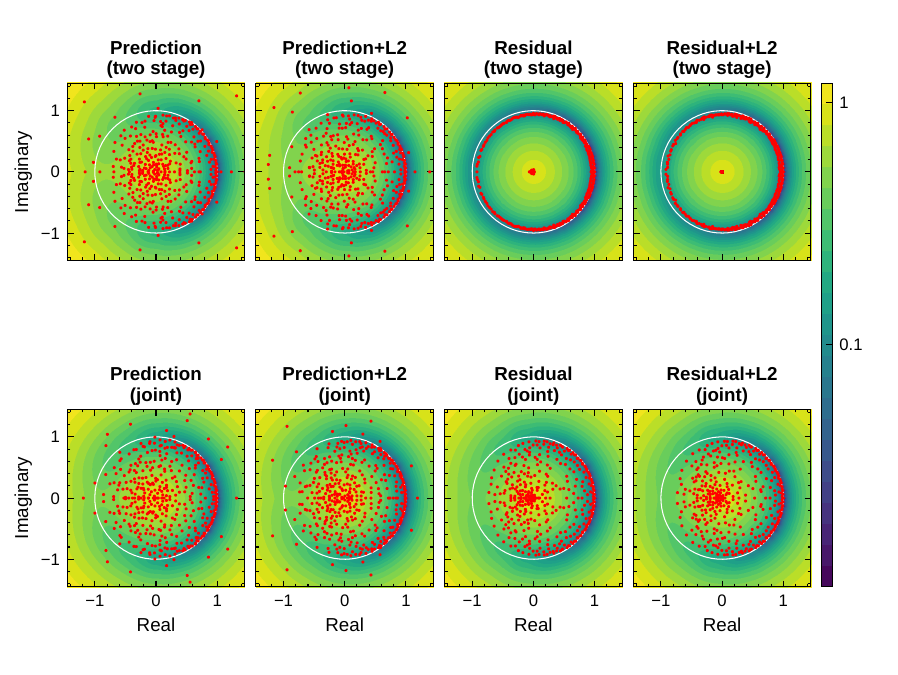}
\caption{Minimal residuals \(\widehat{\tau}^{\mathrm{num}}_{M,N}\) and eigenvalues for eight model variants in Mackey--Glass at latent dimension \(16N_0\).
Colors show, on a logarithmic scale, contours of the minimal residual computed over the retained numerical subspace, with the same color range across all panels.
White curves mark the unit circle, and red points mark the estimated Koopman eigenvalues within the plotted range.
The top row shows two-stage training, and the bottom row shows joint training.
From left to right, the columns show Prediction (the latent-prediction model), Prediction+L2, Residual (the spectral-residual model), and Residual+L2; +L2 adds the operator-norm penalty}
\label{fig:supp-pseudospectra-mackeyglass}
\end{figure}

\begin{figure}[htbp]
\centering
\includegraphics[scale=1]{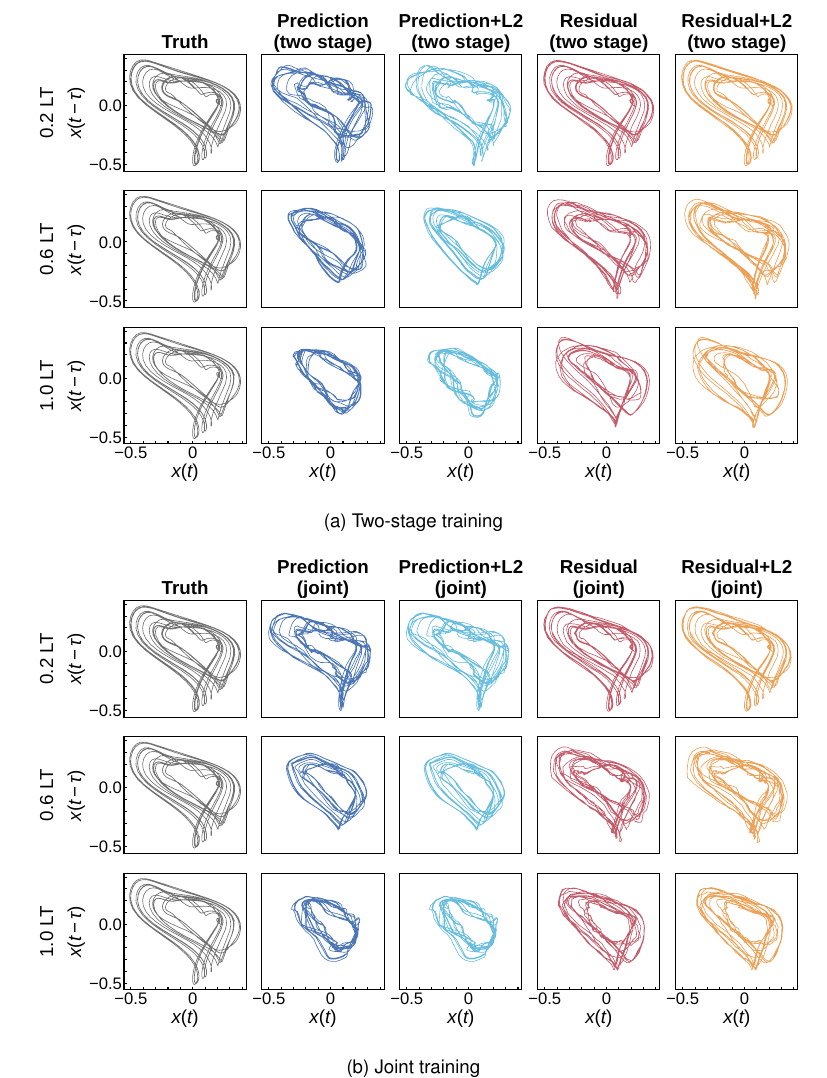}
\caption{Predictions for Mackey--Glass at latent dimension \(16N_0\) (top: two-stage training; bottom: joint training).
The predictions are plotted in the delay plane \((x(t),x(t-\tau))\), with \(\tau=17\) in the time units of the governing equation.
From left to right, the columns show the reference trajectory, Prediction (the latent-prediction model), Prediction+L2, Residual (the spectral-residual model), and Residual+L2; +L2 adds the operator-norm penalty.
Each row corresponds to one prediction horizon (0.2, 0.6, or 1.0 LT), and predictions are made that far ahead from different observation vectors in the reference trajectory.
The prediction target times fall within the first 5 LT of the test trajectory.
Axis limits follow the reference trajectory and are shared across all panels}
\label{fig:supp-decoded-mackeyglass}
\end{figure}

\clearpage
\section{Lorenz-96 system}
\label{supp:lorenz96}
Figures~\ref{fig:supp-pseudospectra-lorenz96} and~\ref{fig:supp-decoded-lorenz96} show the spectral plots and predictions for Lorenz-96.
In the spectral plot, most estimated eigenvalues of the latent-prediction model in the Prediction column cluster in low-residual regions near \(\lambda=1\) under both two-stage and joint training (Sect.~5 of the main text), and more of them are scattered into high-residual regions under two-stage training.
In the prediction figure, the striped structure of the field is distorted at 0.6 LT for all eight model variants.
For Lorenz-96, increasing the latent dimension from \(N_0\) to \(2N_0\) increased the mean valid prediction time (VPT) of the spectral-residual model under two-stage training from 0.198 to 0.211 LT, while median Variance-Scaled Root Mean Squared Error (VRMSE) increased from 0.9959 to 0.9980 (Sect.~4 of the main text).

\begin{figure}[htbp]
\centering
\includegraphics[scale=1]{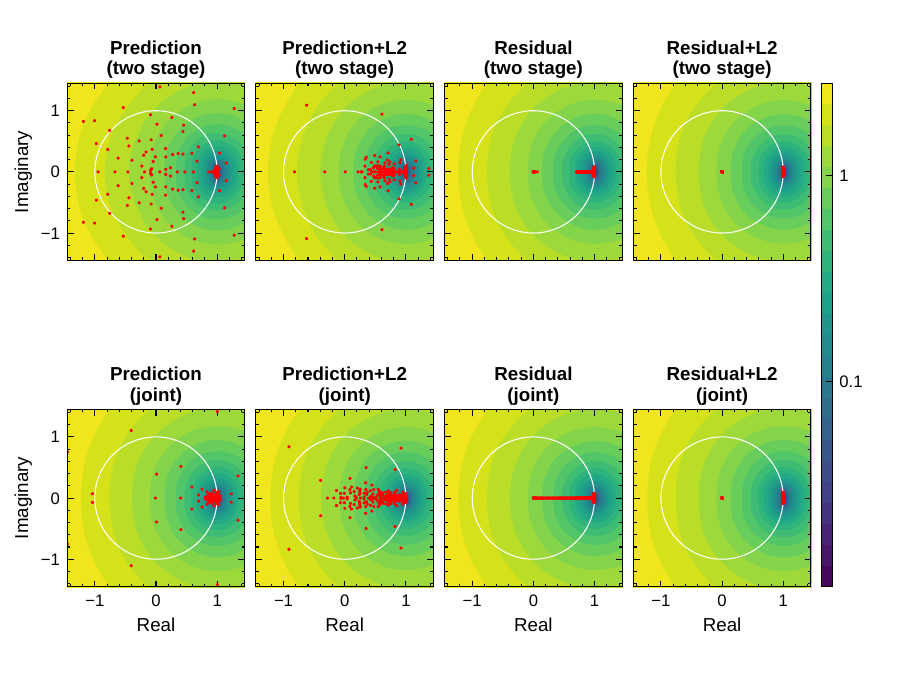}
\caption{Minimal residuals \(\widehat{\tau}^{\mathrm{num}}_{M,N}\) and eigenvalues for eight model variants in Lorenz-96 at latent dimension \(16N_0\).
Colors show, on a logarithmic scale, contours of the minimal residual computed over the retained numerical subspace, with the same color range across all panels.
White curves mark the unit circle, and red points mark the estimated Koopman eigenvalues within the plotted range.
The top row shows two-stage training, and the bottom row shows joint training.
From left to right, the columns show Prediction (the latent-prediction model), Prediction+L2, Residual (the spectral-residual model), and Residual+L2; +L2 adds the operator-norm penalty}
\label{fig:supp-pseudospectra-lorenz96}
\end{figure}

\begin{figure}[htbp]
\centering
\includegraphics[scale=1]{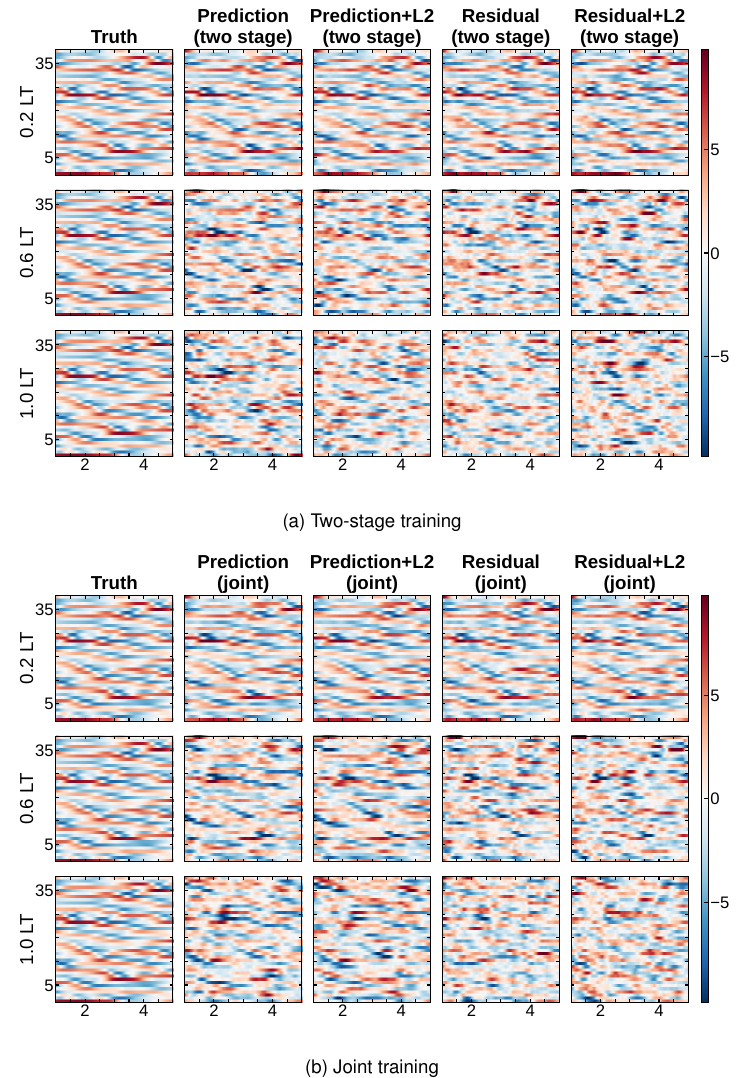}
\caption{Predictions for Lorenz-96 at latent dimension \(16N_0\) (top: two-stage training; bottom: joint training).
From left to right, the columns show the reference field, Prediction (the latent-prediction model), Prediction+L2, Residual (the spectral-residual model), and Residual+L2; +L2 adds the operator-norm penalty.
Each row corresponds to one prediction horizon (0.2, 0.6, or 1.0 LT), and predictions are made that far ahead from different observation vectors in the reference trajectory.
Horizontal axis: prediction target time in LT; vertical axis: component index 0--39; color: component value, with the same color range across all panels.
Component indices 0--39 correspond, in order, to \(x_1,\ldots,x_{40}\) in the main text}
\label{fig:supp-decoded-lorenz96}
\end{figure}

\clearpage
\section{Kuramoto--Sivashinsky system}
\label{supp:ks}
Figures~\ref{fig:supp-pseudospectra-ks} and~\ref{fig:supp-decoded-ks} show the spectral plots and predictions for Kuramoto--Sivashinsky.
In the prediction figure, the striped structure of the field is distorted at 0.6 LT for all eight model variants.

\begin{figure}[htbp]
\centering
\includegraphics[scale=1]{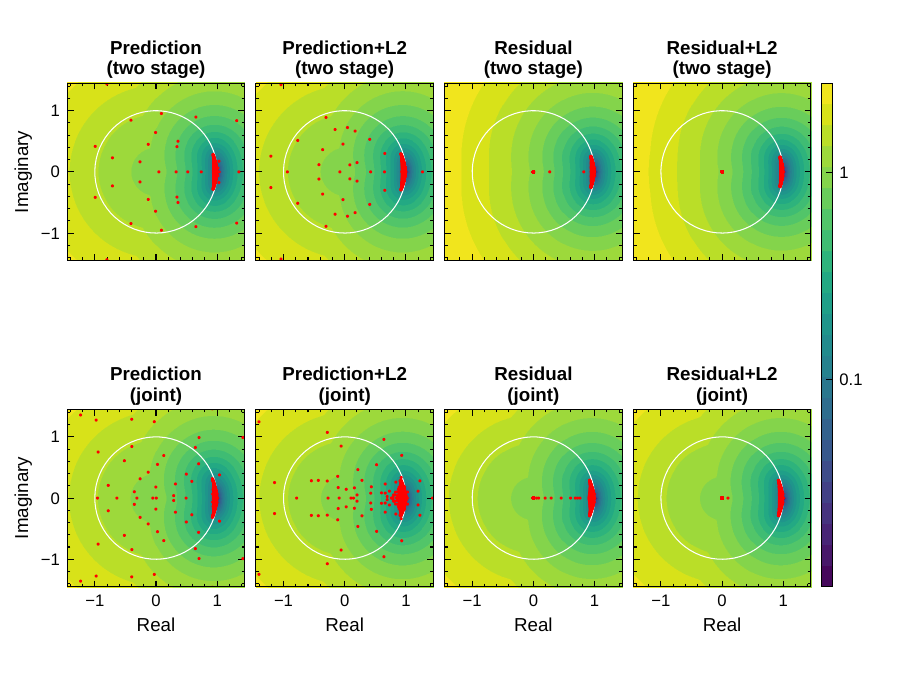}
\caption{Minimal residuals \(\widehat{\tau}^{\mathrm{num}}_{M,N}\) and eigenvalues for eight model variants in Kuramoto--Sivashinsky at latent dimension \(16N_0\).
Colors show, on a logarithmic scale, contours of the minimal residual computed over the retained numerical subspace, with the same color range across all panels.
White curves mark the unit circle, and red points mark the estimated Koopman eigenvalues within the plotted range.
The top row shows two-stage training, and the bottom row shows joint training.
From left to right, the columns show Prediction (the latent-prediction model), Prediction+L2, Residual (the spectral-residual model), and Residual+L2; +L2 adds the operator-norm penalty}
\label{fig:supp-pseudospectra-ks}
\end{figure}

\begin{figure}[htbp]
\centering
\includegraphics[scale=1]{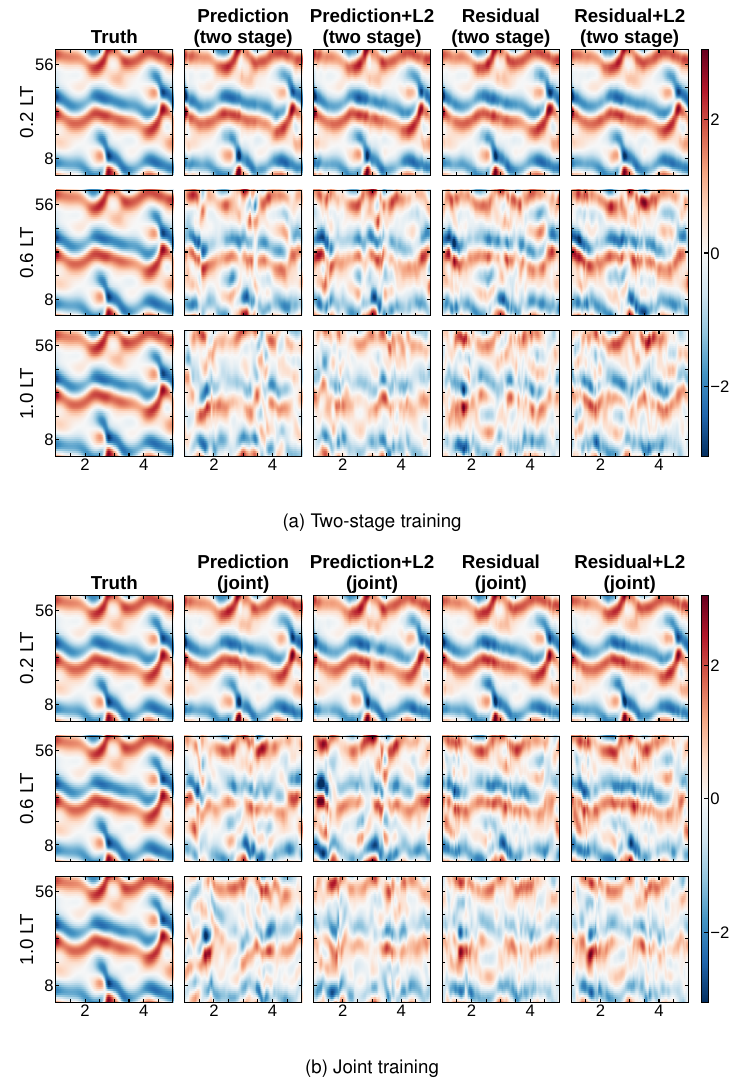}
\caption{Predictions for Kuramoto--Sivashinsky at latent dimension \(16N_0\) (top: two-stage training; bottom: joint training).
From left to right, the columns show the reference field, Prediction (the latent-prediction model), Prediction+L2, Residual (the spectral-residual model), and Residual+L2; +L2 adds the operator-norm penalty.
Each row corresponds to one prediction horizon (0.2, 0.6, or 1.0 LT), and predictions are made that far ahead from different observation vectors in the reference trajectory.
Horizontal axis: prediction target time in LT; vertical axis: spatial sample index 0--63 on a domain of length 22; color: field value, with the same color range across all panels}
\label{fig:supp-decoded-ks}
\end{figure}